\theoremstyle{plain}
\newtheorem{theorem}{Theorem}[section]
\newtheorem{lemma}[theorem]{Lemma}
\theoremstyle{definition}
\newtheorem{assump}{Assumption}[theorem]
\theoremstyle{remark}
\DeclareMathOperator{\E}{\mathbb{E}}
\DeclareMathOperator{\Prob}{\mathbb{P}}
\DeclareMathOperator{\R}{\mathbb{R}}
\DeclareMathOperator{\N}{\mathbb{N}}
\DeclareMathOperator{\Id}{\mathds{1}}
\DeclareMathOperator{\Er}{{\cal E}}
\DeclareMathOperator{\Or}{{\cal O}}
\def\inp{d}
\def\hids{p}
\def\hidt{k}
\def\nsamp{n}
\def\i{\nu}
\def\noise{\Delta}
\def\noisevar{\zeta}
\def\act{\sigma}
\def\lr{\gamma}
\def\w{\bm{w}}
\def\W{\bm{W}}
\def\x{\bm{x}}
\def\y{\bm{y}}
\def\lf{\lambda}
\def\alp{t}
\def\qjl{q_{jl}}
\def\q{q}
\def\Q{\bm{Q}}
\def\mjr{m_{jr}}
\def\m{m}
\def\M{\bm{M}}
\def\prs{\rho_{rs}}
\def\p{\rho}
\def\P{\bm{P}}
\def\Om{\bm{\Omega}}
\def\Omel{\Omega}
\def\gauss{{\cal N}}
\def\del{\delta}
\def\kap{\kappa}
\def\loss{{\cal L}}
\def\S{S}
\def\T{\tau}
\def\S{S}
\def\risk{{\cal R}}
\def\risks{{\cal R}_\text{s}}
\def\riskt{{\cal R}_\text{t}}
\def\riskst{{\cal R}_\text{st}}
\def\Explf{\E_{ \bm{\lf},\bm{\lf}^* \sim \gauss(\bm{\lf},\bm{\lf}^* |\bm{0},\Om)} }
\def\Explfs{\E_{ \bm{\lf} \sim \gauss(\bm{\lf} |\bm{0}, \Q)} }
\def\Explft{\E_{ \bm{\lf}^* \sim \gauss(\bm{\lf}^* |\bm{0},\P)} }
\def\Expnoise{\E_{\noisevar \sim \Prob (\noisevar)}}
\def\Expxy{\E_{\x, y \sim \Prob (\x, y)}}
\def\Expxgauss{\E_{\x\sim\gauss(\x|\bm{0},\Id )}}
\def\LambE{\Omel^{\alpha\beta\gamma}}
\def\LambEM{\bm{\Om}^{\alpha\beta\gamma}}
\def\LambEE{\Omel^{\alpha\beta\gamma\delta}}
\def\LambEEM{\bm{\Om}^{\alpha\beta\gamma\delta}}
\def\LambEEbar{\bar{\Omel}^{\alpha\beta\gamma\delta}}
\def\f{f}
\def\LambER{\bm{\Om}^{\alpha\beta}}
\def\LambERscal{\Omel^{\alpha\beta}}
\appto\appendix{\counterwithin{equation}{section}}
\title{Phase diagram of Stochastic Gradient Descent in high-dimensional two-layer neural networks}
\author[1,3]{\vspace{2.5ex} Rodrigo Veiga\thanks{Currently: Ecole Polytechnique F\'{e}d\'{e}rale de Lausanne (EPFL), Lab for Statistical Mechanics of Inference in Large Systems (SMILS), CH-1015 Lausanne, Switzerland. Email:  \href{mailto:rodrigo.veiga@epfl.ch}{rodrigo.veiga@epfl.ch}}}
\author[1]{Ludovic Stephan}
\author[1]{Bruno Loureiro}
\author[1]{Florent Krzakala}
\author[2]{Lenka Zdeborov\'a}
\affil[1]{\small Ecole Polytechnique F\'{e}d\'{e}rale de Lausanne (EPFL).
Information, Learning and Physics (IdePHICS) lab. \newline CH-1015 Lausanne, Switzerland.}
\affil[2]{\small Ecole Polytechnique F\'{e}d\'{e}rale de Lausanne (EPFL).
Statistical Physics of Computation (SPOC) lab. \newline CH-1015 Lausanne, Switzerland.}
\affil[3]{\small Universidade de S\~{a}o Paulo. Instituto de F\'{i}sica. S\~{a}o Paulo, SP, Brazil.}
\date{}
\begin{document}

\maketitle

\vspace{-2.5ex}
\begin{abstract}
Despite the non-convex optimization landscape, over-parametrized shallow networks are able to achieve global convergence under gradient descent. The picture can be radically different for narrow networks, which tend to get stuck in badly-generalizing local minima. Here we investigate the cross-over between these two regimes in the high-dimensional setting, and in particular investigate the connection between the so-called mean-field/hydrodynamic regime and the seminal approach of Saad \& Solla. Focusing on the case of Gaussian data, we study the interplay between the learning rate, the time scale, and the number of hidden units in the high-dimensional dynamics of stochastic gradient descent (SGD). Our work builds on a deterministic description of SGD in high-dimensions from statistical physics, which we extend and for which we provide rigorous convergence rates.
\end{abstract}




\section{Introduction}

Descent-based algorithms such as stochastic gradient descent (SGD) and its variants are the workhorse of modern machine learning. They are simple to implement, efficient to run and most importantly: they work well in practice. 
A detailed understanding of the performance of SGD is a major topic in machine learning.
Quite recently, significant progress was achieved in the context of learning in shallow neural networks. In a series of works, it was shown that the optimisation of wide two-layer neural networks can be mapped to a convex problem in the space of probability distributions over the weights  \cite{mei_2018,chizat_2018,rotskoff_2019,sirignano2020mean}. This remarkable result implies global convergence of two-layer networks towards perfect learning provided that the number of hidden neurons is large, the learning rate is sufficiently small and enough data is at disposition. This line of work is commonly referred to as the mean-field or the hydrodynamic limit of neural networks. Mathematically, these works showed that one could describe the entire dynamics
using a partial differential equation (PDE) in $\inp$ dimensions.

In a different, and older, line of work one-pass SGD for two-layer neural networks with a {\it finite number} $\hids$ of hidden units, synthetic Gaussian input data and teacher-generated labels has been widely studied starting with the seminal work of \cite{saad_1995}. These works consider the limit of high-dimensional data and show, in particular, that the stochastic process driven by gradient updates converge to a set of $\hids^2$ deterministic ordinary differential equations (ODEs) as the input dimension $\inp\to\infty$ and the learning rate is proportional to $1/ \inp$. The validity of these ODEs in this limit was proven by \cite{goldt_2019}. However, the picture drawn from the analysis of these ODEs is slightly different from the mean-field/hydrodynamic picture: in this case SGD can get stuck for long time in minima associated to no specialization of the hidden units to the teacher hidden units, and even when it converges to specializing minima, it fails to perfectly learn (i.e. to achieve zero population risk). In fact, in this analysis, the interplay between the limit of the learning rate going to zero and $\inp \to \infty$ appeared to be fundamental. 

One should naturally wonder about the link between these two sets of works with, on the one hand a $\inp$-dimensional PDE (with large $\hids$), and on the other a $\hids^2$-dimensional ODE (with large $\inp$). In this work we aim to build a bridge between these two approaches for studying one-pass SGD. 

Our starting point is the framework from \cite{saad_1995}, which we build upon and expand to a much broader range of choices of learning rate, time scales, and hidden layer width. This allows us to provide a sharp characterisation of the performance of SGD for two-layer neural networks in high-dimensions. We show it depends on the precise way in which the limit is taken, and in particular on how the quantity of data, the hidden layer width, and the learning rate scale as $\inp \to \infty$. For different choices of scaling, we can observe scenarios such as perfect learning, imperfect learning with an unavoidable error, or even no learning at all. 

 

 \begin{figure}[t]
\begin{subfigure}{.45\textwidth}
  \centering
    \includegraphics[width=0.95\linewidth]{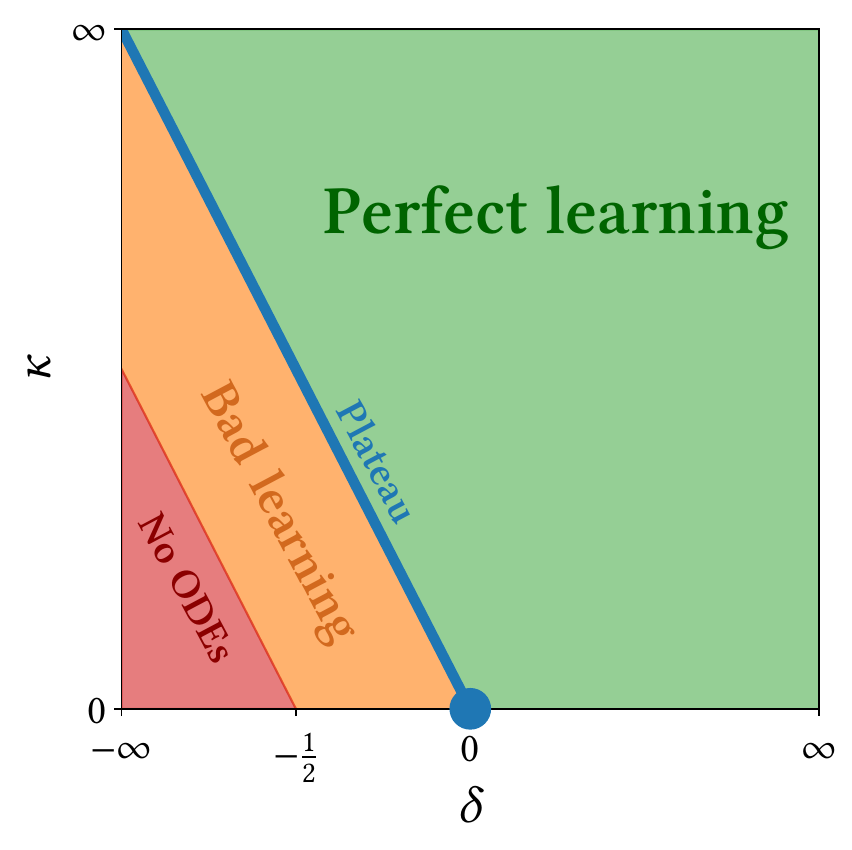}
   \caption{The phase diagram of SGD learning regimes for two-layer neural networks in the high-dimensional input layer limit $\inp \rightarrow\infty$. Eqs. \eqref{eq:scalings} define proper time scalings for each of the regions. Perfect learning region: $\kap + \del > 0 $. Plateau line: $\kap+\del=0$. Bad learning region: $-\nicefrac{1}{2} < \kap + \del < 0$. No ODEs region: $ \kap+\del < - \nicefrac{1}{2}  $.}     \label{fig:phase_diagram}
\end{subfigure}%
\hfill
\begin{subfigure}{.45\textwidth}
  \centering
\centerline{\includegraphics[width=\textwidth]{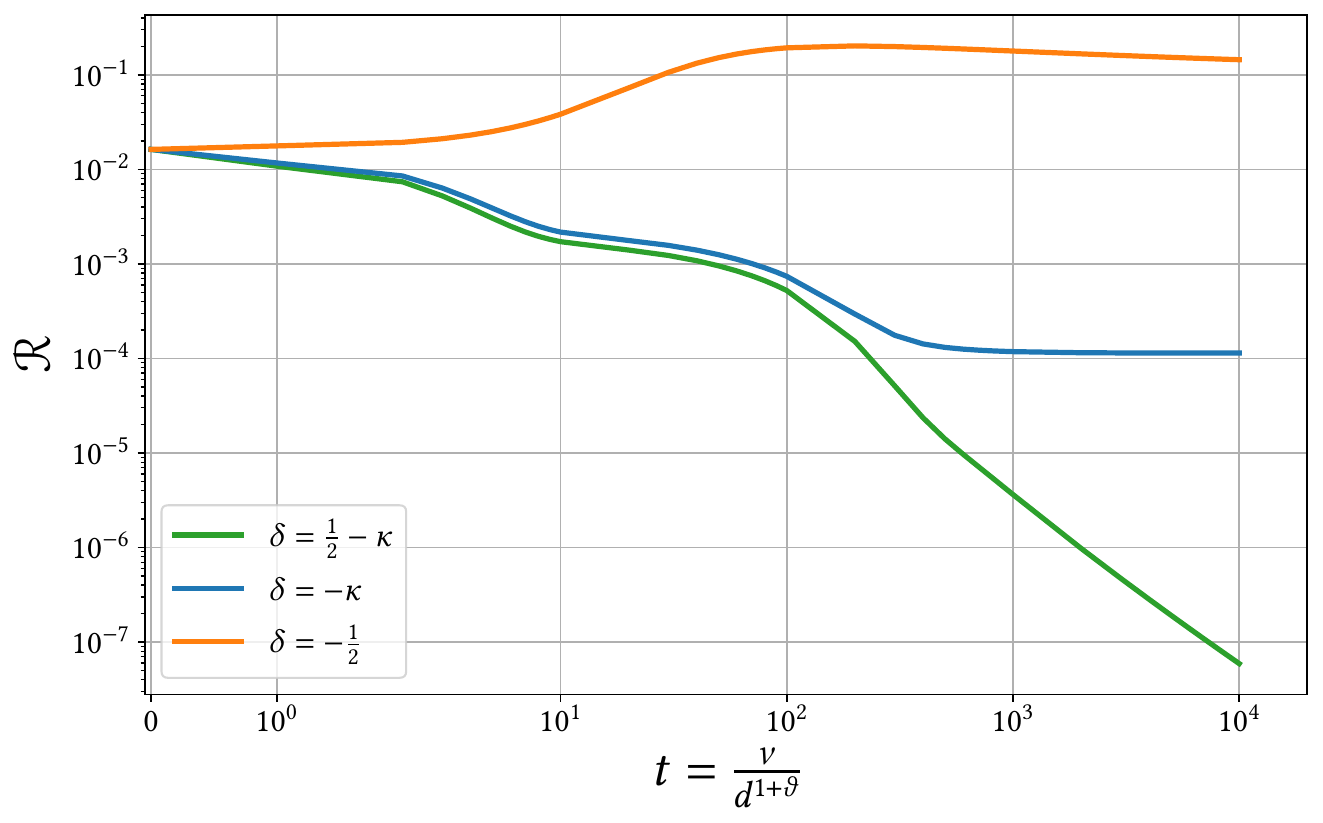}}
\caption{A solution of the ODEs in all regions of Figure \ref{fig:phase_diagram}, with matching colors. Parameters $\kap = 0.301$, $\hids=8$, $\hidt=4$, $\prs = \delta_{rs}$. Noise: $\noise = 10^{-3}$. Activation function: $\act(x) = \erf(x/\sqrt{2})$. Data distribution: $\Prob (\x) =  \gauss(\x | \bm{0}, \Id )$.  The time scaling is not uniform through the phase diagram: $\vartheta = \kap+\delta$ on green and blue regimes and $\vartheta = 2(\kap+\delta)$ on the orange region. The green curve decays as a power law  to zero excess error.}\label{fig:simul_phdiagr__}
  \end{subfigure}
\caption{Phase diagram (left) and typical behavior of the ODE in each regions (right).}
\end{figure}
 
As a consequence of our analysis, we provide a phase diagram (see Figure~\ref{fig:phase_diagram}) describing the possible scenarios arising in the high-dimensional setting.
Our {\bf main contributions} are as follow:
\begin{description}[wide = 2pt]
    \item[C1]  We rigorously show that the dynamics of SGD can be captured by a set of deterministic ODEs, considerably extending the proof of \cite{goldt_2019} to accommodate for general time scalings defined by an arbitrary learning rate, and a general range of hidden layer width. We provide much finer non-asymptotic guarantees which are crucial for our subsequent analysis. 
    \item[C2]  From the analysis of the ODEs, we derive a phase diagram of SGD for two-layer neural networks in the high-dimensional input layer limit $\inp \rightarrow\infty$. In particular, scaling both the learning rate $\lr$ and hidden layer width $\hids$ with the input dimension $\inp$ as 
\begin{subequations}
\label{eq:scalings}
\begin{equation}
    \lr \propto d^{-\del} \;,
\end{equation}
\begin{equation}
    \hids \propto  d^\kappa  \;,
\end{equation}
\end{subequations}
we identify four different learning regimes which are summarized in Figure \ref{fig:phase_diagram}:
\begin{itemize}
    \item Perfect learning (\textcolor{OliveGreen}{green region, $\kap>-\del$}): we show that perfect learning (zero population risk) can be asymptotically achieved 
    with $\nsamp \sim \inp^{1+\kap+\del}$ samples
    even for tasks with additive noise. 
    \item Plateau (\textcolor{RoyalBlue}{blue line $ \kap = -\del $}): learning reaches a plateau related to the noise strength. The point $\kap=\del=0$ goes back to the classical work of \cite{saad_1995}.
    \item Bad learning (\textcolor{orange}{orange region $  -\nicefrac{1}{2} < \kap + \del < 0   $}): here the noise dominates the learning process. 
    \item No ODEs (\textcolor{red}{red region  $ \kap+\del < - \nicefrac{1}{2}$}): the stochastic process associated to SGD is not guaranteed to converge to a set of deterministic ODEs. This region is thus outside the scope of our analysis. 
\end{itemize}
\end{description}

To better illustrate this phase diagram we present in Figure \ref{fig:simul_phdiagr__} a solution of the ODEs in all three regimes.

\paragraph{Relation to previous work --} 
Deterministic dynamical descriptions of one-pass stochastic gradient descent in high-dimensions have a long tradition in the statistical physics community, starting with single- and two-layer neural networks with few hidden units \cite{kinzel_1990,kinouchi_1992, copelli_1995, biehl_1995, riegler_1995}. The seminal work by \cite{saad_1995} overcame previous limitations by constructing a set of deterministic ODEs for two-layer networks with any finite number of hidden units, paving the way for a series of important contributions \cite{saad_1996, vicente_1998,saad_1999, goldt_2019}. This line of work corresponds to the $\kappa = \delta = 0$ case of Figure \ref{fig:phase_diagram}. One of our goal is to generalize this picture beyond fixed hidden layer size and learning rate.

A more recent line of work investigating the dynamics of SGD is the so-called \emph{mean-field limit} \cite{mei_2018,mei_2019, chizat_2018,rotskoff_2019,sirignano2020mean}, which connects the SGD dynamics of large-width two-layer neural networks to a diffusion equation in the hidden layer weight density. In particular, \cite{mei_2019} provide non-asymptotic convergence bounds for sufficiently small learning rates, corresponding to the green region of Figure \ref{fig:phase_diagram} (with $ \hids \to \infty$). The mean-field approach computes the empirical distribution (in $\R^\inp$) of the hidden layer weights, while we focus on the macroscopic overlaps between the teacher and student weights.

\paragraph{Reproducibility} A code is provided at \href{https://github.com/rodsveiga/phdiag_sgd}{https://github.com/rodsveiga/phdiag\_sgd}.

\section{Setting}
Consider a supervised learning regression task. The data set is composed of $\nsamp$ pairs $ (\x^\i , y^\i)_{\i \in [\nsamp]} \in \R^{\inp + 1} $ identically and independently sampled from $ \Prob (\x , y)$. The probability $ \Prob (\x) $  is assumed to be known and $ \Prob (y | \x) $ is modelled by a two layer neural network called the {\it teacher}. Given a feature vector $\x^\i \in \R^\inp$, the respective label $ y^{\i} \in \R$ is defined as the output of a network with $\hidt$ hidden units, fixed weights $ \W^* \in \R^{ \hidt \times \inp }$ and an activation function $\act : \R \rightarrow \R$:
\begin{equation}
    y^\i = \f ( \x^\i  , \W^* )  + \sqrt{\noise} \noisevar^\i   \;,
\end{equation}
where \begin{equation}
    \f ( \x^\i , \W^* ) = \frac{1}{\hidt} \sum_{r=1}^{\hidt}  \act \left( \frac{\w_{r}^{*\top} \x^\i}{\sqrt{\inp}}  \right) = \frac{1}{\hidt}
     \sum_{r=1}^{\hidt} \sigma ( \lf_{r}^{*\i}    ) \;,
\end{equation}
with $\w_{r}^* \equiv [\W^* ]_r \in \R^\inp$ as the $r$-th row of the matrix $\W^*$ and $\lf^{*\i}_{r} \equiv \nicefrac{\w_{r}^{*\top} \x^\i}{\sqrt{\inp}} \in \R  $ as the $r$-th component of the teacher {\it local field} vector $ \bm{\lf}^{*\i} \in \R^\hidt $. The parameter $\noise \ge 0$ controls the strength of additive label noise: $\noisevar^\i  \sim \Prob ( \noisevar^\i) $ such that $ \Expnoise [\noisevar] = 0  $ and $ \Expnoise [\noisevar^2] = 1$.

Given a new sample $ \x \sim \Prob (\x) $ outside the training data, the goal is to obtain an estimation $\hat{\f} ( \x )$ for the respective label~$y$. The error is quantified by a loss function $\loss ( y , \hat{\f} ( \x, \bm{\Theta} ) )$, where $\bm{\Theta}$ is an arbitrary set of parameters to be learned from data.

In this manuscript we are interested in the problem of estimating $\W^*$ with another two-layer neural network with the same activation function, which we will refer to as the {\it student}. The student network has $\hids$ hidden units and a matrix of weights $\W \in \R^{\hids \times \inp}$ to be {\it learned} from the data. Given a feature vector $\x \sim \Prob (\x)$ the student prediction for the respective label is given as
\begin{equation}
   \hat{\f} ( \x , \W) = \frac{1}{\hids}  \sum_{j=1}^{\hids} \act \left( \frac{\w_{j}^\top \x}{\sqrt{\inp}}  \right) =   \frac{1}{\hids} \sum_{j=1}^{\hids} \act ( \lf^\i_j )\;,
\end{equation}
where $\w_{j} \equiv [\W]_j \in \R^\inp$ is the $j$-th row of the matrix $\W$ and $\lf_{j} \equiv \nicefrac{\w_{j}^\top \x}{\sqrt{\inp}} \in \R $ is defined as $j$-th component of the student {\it local field} vector $ \bm{\lf} \in \R^\hids $.
\paragraph{One-pass gradient descent --} 
Typically, one minimizes the {\it empirical risk} over the full data set. Instead, learning with {\it one-pass} gradient descent minimizes directly the {\it population risk}:
\begin{equation}
    \risk ( \W,  \W^*  ) \equiv  \Expxy \left[ \loss \left( \f( \x, \W^* ), \hat{\f} ( \x, \W ) \right)    \right]    \;.
\end{equation}
Given a {\it single} sample $(\x^\i , y^\i) $ the weights are updated sequentially by the gradient descent rule:
\begin{equation}\label{eq:w_update}
   \w^{\i+1}_{j} = \w^{\i}_{j} - \lr \bm{\nabla}_{\w_j}\loss \left( y^\i , \hat{\f} ( \x^{\i}, \bm{W} ) \right)   \;,
\end{equation}   
with $\i \in [\nsamp]$ and $j \in [\hids]$. The parameter $\lr > 0 $ is the learning rate. Despite being a simplification with respect to batch learning, one-pass gradient descent is an amenable surrogate for the theoretical analysis of non-convex optimization, since at each step the gradient is computed with a fresh data sample, which is equivalent to performing SGD directly on the population risk.


In particular, in this manuscript we assume realizability $p \ge k$, and focus our analysis on the square loss $ \loss ( y , \hat{y}  ) = \frac{1}{2} (  y - \hat{y})^2 $, leading to
\begin{equation}
 \w^{\i +1}_j = \w^{\i}_{j} + \frac{\lr}{\hids\sqrt{\inp}} \act' (\lf_{j}^\i) \Er^\i \x^\i  \;, 
\end{equation}
where \begin{equation}
    \Er^\i \equiv \frac{1}{\hidt} \sum_{r=1}^{\hidt} \act( \lf_{r}^{*\i}  ) - \frac{1}{\hids} \sum_{l=1}^{\hids} \act ( \lf_{l}^\i  )   + \sqrt{\noise} \noisevar^\i \;.
\end{equation}
with population risk given by
\begin{equation}
    \risk ( \W,  \W^*  ) =  \frac{1}{2} \Expxy \left[ \left( \hat{\f}( \x ,  \W) - \f ( \x ,  \bm{\
    W}^*)  \right)^{2}   \right]    \;.
\end{equation}
Therefore, from the above expression we can see that to monitor the population risk along the learning dynamics it is sufficient to track the joint distribution of the local fields $(\bm{\lf} , \bm{\lf}^*)$. For Gaussian data $ \Prob (\x) =  \gauss(\x | \bm{0}, \Id ) $, one can replace the expectation $\Expxy [ \cdot ]$ by $\Explf [ \cdot ]$ and fully describe the dynamics through the following sufficient statistics, known in the statistical physics literature as {\it macroscopic variables}: 
\begin{subequations}
\begin{equation}
   \Q^\i \equiv  \Expxy \left[ \bm{\lf}^\i \bm{\lf}^{\i\top}  \right] = \frac{1}{\inp} \W^{\i\top} \W^{\i} \;,
\end{equation}
\begin{equation}
   \M^\i \equiv  \Expxy \left[ \bm{\lf}^\i \bm{\lf}^{*\i\top}  \right] 
   =  \frac{1}{\inp} \W^{\i\top} \W^* \;,
\end{equation}
\begin{equation}
   \P \equiv \Expxy \left[ \bm{\lf}^{*\i} \bm{\lf}^{*\i\top}  \right] = \frac{1}{\inp} \W^{*\top} \W^* \;.
\end{equation}
\end{subequations}
with matrix elements, called {\it order parameters} in the statistical physics literature, denoted by $ \qjl^\i \equiv [\Q^\i]_{jl} $,  $ \mjr^\i \equiv [\M^\i]_{jr} $ and  $ \prs
\equiv [\P]_{rs} $. The {\it macroscopic state} of the system at the learning step $\i$ is given by the {\it overlap matrix} $\Om^\i \in \R^{(\hids+\hidt)\times(\hids+\hidt)} $:
\begin{equation}
\Om^\i = \begin{bmatrix}
\Q^\i & \M^\i \\
\M^{\i\top} & \P 
\end{bmatrix} \;,
\end{equation}
and the population risk is completely determined by the macroscopic state:
\begin{equation}
\label{eq:pop_risk}
    \risk ( \Om  ) =    \frac{1}{2} \Explf \Expnoise 
    \left[ \left( \hat{\f}( \bm{\lf} ) - \f ( \bm{\lf}^* )  \right)^{2}   \right]    \;.
\end{equation}
The training dynamics \eqref{eq:w_update} defines a discrete-time stochastic process for the evolution of the overlap matrix
\begin{equation}
\label{eq:disc_seq}
\left\{  \Om^\i \in \R^{(\hids+\hidt)\times(\hids+\hidt)} \; , \i \in [ \nsamp  ] \right\} \;,
\end{equation}
with $\P$ fixed and $\Q^\i$ and $\M^\i$ updated as:
\begin{subequations}
\label{eq:qm_up}
\begin{equation}
\qjl^{\i+1} - \qjl^{\i} = \frac{\lr}{\hids\inp} \underbrace{\left( \Er_{j}^\i \lf_{l}^\i  + \Er_{l}^\i \lf_{j}^\i \right)}_{\text{learning}} + \frac{\lr^2 \lVert \x \rVert^2 }{\hids^2 \inp^2 } 
\underbrace{\Er_{j}^\i   \Er_{l}^\i}_{\text{variance}}    \;,
\end{equation}
\begin{equation}
\mjr^{\i+1} - \mjr^\i = \frac{\lr}{\hids\inp} \underbrace{\Er_{j}^\i \lf_{r}^{*\i}}_{\text{learning}} \;, 
\end{equation}
\end{subequations}
with $\i \in [\nsamp]$, $j , l \in [\hids]$, $r \in [\hidt]$ and $\Er_{j}^\i \equiv \act'( \lf_{j}^\i ) \Er^\i$. In what follows, we will make the concentration assumption $\lVert \x \rVert^2 = d$; this will be justified in the proof of Theorem \ref{th:conv_eps}.

We emphasize in \eqref{eq:qm_up} the specific role played by each term in the right hand-side. The "learning" terms are the fundamental ones, that actually drive the learning of the teacher by the student.  We show in Appendix \ref{app:c:flow} that these "learning" terms are identical to those obtained in the gradient flow approximation of SGD, whose performance is the topic of many works \cite{mei_2018,chizat_2018,rotskoff_2019,sirignano2020mean}.  Those are {\it precisely} the terms that draw the population risk towards zero. However, in our setting there is an additional variance term (so that this flow approximation is incomplete) that corresponds to the fluctuations of $\loss( \x, \W, \W^*)$ around its expected value $\risk(\W, \W^*)$. In particular, this is where the effects of the noise $\noisevar$ can be felt. These terms were sometimes denoted as ($I_2$) and ($I_4$) in \cite{saad_1995_0}. We shall see that the additional "variance" term is the one responsible for the plateau in the critical (blue) region of Figure \ref{fig:phase_diagram}, while its contribution vanishes in the perfect learning (green) region.

Additionally, albeit our work particularizes to Gaussian input data, we believe our conclusion, and the phase diagram discussed in Figure~\ref{fig:phase_diagram}, to hold beyond this restricted case. Indeed, while the Gaussian assumption is crucial to reach a particular set of ODEs and their analytic expression, the approach can be applied to more complex data distribution, as long as one can track the sufficient statistics required to have a closed set of equations. For instance, \cite{refinetti2021classifying} obtained very similar equations for an arbitrary mixture of Gaussians -- that would obey the same scaling analysis as ours -- while  \cite{goldt2020gaussian,hu2020universality,montanari2022universality} proved that many complex distributions behave as Gaussians in high-dimensional setting, including, e.g. realistic GAN-generated data. We thus expect our conclusions to be robust in this respect.

\section{Main results}\label{sec:main}
Although $\alp_0 = \i / \inp$ seems to be the most natural time scaling in the high-dimensional limit $\inp\rightarrow\infty$, if $\lr$ and $\hids$ are allowed to vary with $\inp$ the right-hand side (RHS) of Eqs.~\eqref{eq:qm_up} can diverge and render the ODE approximation obsolete. Instead, for a given time scaling $\delta \alp$, we can rewrite Eqs.~\eqref{eq:qm_up} as
\begin{subequations}
\label{eq:qm_up_rewrite}
\begin{equation}
\label{eq:q_up_rewrite}
\frac{\qjl^{\i+1} - \qjl^{\i}}{ \delta \alp } =  \frac{\lr}{\hids\inp \, \delta \alp} \left( \Er_{j}^\i \lf_{l}^\i  + \Er_{l}^\i \lf_{j}^\i  \right)+ 
\frac{\lr^2 }{\hids^2\,\inp\, \delta \alp}
\Er_{j}^\i   \Er_{l}^\i    \;,
\end{equation}
\begin{equation}
\frac{\mjr^{\i+1} - \mjr^\i}{\delta \alp} = \frac{\gamma}{\hids\inp\, \delta \alp}\Er_{j}^\i \lf_{r}^{*\i} \;.
\end{equation}
\end{subequations}
In Theorem \ref{th:conv_eps} we prove that as $\inp \to \infty$, $\Om^{\i}$ converges to the solution of the ODE:
\begin{equation}
\label{eq:ODE_lemma1}
    \dv{\alp}  \bar{\Om} (\alp) = \psi \left( \bar{\Om}(\alp)   \right) \;,
\end{equation}
where $\psi : \R^{(\hids+\hidt)\times(\hids+\hidt)} \rightarrow \R^{(\hids+\hidt)\times(\hids+\hidt)}$ is the expected value of the RHS of Eqs.~\eqref{eq:qm_up_rewrite}, provided that this solution stays bounded. This enhances the result of \cite{goldt_2019} by providing convergence rates to the ODEs encompassing all scalings adopted hereafter:  

\begin{theorem} [Deterministic scaling limit of stochastic processes] \label{th:conv_eps} Let $\T \in \R $ be the continuous time horizon and $ \delta \alp  = \delta \alp (\inp)$ be a time scaling factor such that the following assumptions hold:
\begin{enumerate}
     \item \label{ass_main:2} the time scaling $\delta \alp$ satisfies for some constant $c$,
        \begin{equation}\label{eq:lower_bound_deltat}
        \delta \alp \geq c\, \max\left( \frac{\lr}{\hids \inp}, \frac{\lr^2}{\hids^2 \inp}  \right)
        \end{equation}
    \item \label{ass_main:1} the activation function $\act$ is $L$-Lipschitz,
    \item \label{ass_main:3} the function $\psi : \R^{(\hids+\hidt)\times(\hids+\hidt)} \rightarrow \R^{(\hids+\hidt)\times(\hids+\hidt)}$ is $L'$-Lipschitz.
\end{enumerate}
Then, there exists a constant $C > 0$ (depending on $c, L, L'$) such that for any  $0 \leq \i \leq  \lfloor \T / \delta \alp \rfloor $, the following inequality holds:
\begin{equation}
\label{eq:conv_bound}
\E\;\norm*{\Om^\i -\bar{\Om} \left( \i \delta \alp \right)  }_\infty \le e^{C\tau} \, \log (\hids) \sqrt{\delta \alp} \;.
\end{equation}
\end{theorem}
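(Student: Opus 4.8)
The plan is to follow the classical recipe for proving that a discrete stochastic process converges to the solution of an ODE (à la Kushner--Clark / the approach of \cite{goldt_2019}), but tracking all constants carefully enough that they can be expressed in terms of $\lr$, $\hids$, $\inp$ and $\delta\alp$. Write $\Om^\i$ for the discrete process and $\bar\Om(\alp)$ for the ODE solution, and let $\bar\Om_\i := \bar\Om(\i\,\delta\alp)$ be the ODE sampled at the discrete times. The key decomposition is to compare one step of the discrete process with one step of an Euler discretization of the ODE with step $\delta\alp$:
\begin{equation*}
\Om^{\i+1} - \bar\Om_{\i+1} = \bigl(\Om^\i - \bar\Om_\i\bigr) + \delta\alp\,\bigl(\Phi(\Om^\i,\x^\i,\noisevar^\i) - \psi(\Om^\i)\bigr) + \delta\alp\,\bigl(\psi(\Om^\i) - \psi(\bar\Om_\i)\bigr) + \mathcal{E}_\i\,,
\end{equation*}
where $\Phi$ denotes the (random) normalized increment given by the RHS of Eqs.~\eqref{eq:qm_up_rewrite}, so that $\E[\Phi\mid\Om^\i]=\psi(\Om^\i)$ by definition, and $\mathcal{E}_\i$ is the ODE discretization error, which is $O((\delta\alp)^2)$ in operator norm by Assumption~\ref{ass_main:3} (Lipschitz $\psi$ bounds the second derivative of the flow along the trajectory, using that the solution stays bounded on $[0,\T]$).

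The three remaining terms are handled separately. The term $\delta\alp(\psi(\Om^\i)-\psi(\bar\Om_\i))$ is bounded by $L'\,\delta\alp\,\norm{\Om^\i-\bar\Om_\i}_\infty$ using Assumption~\ref{ass_main:3}; this is what produces the Grönwall factor $e^{C\T}$ after summing $\lfloor\T/\delta\alp\rfloor$ steps. The martingale term $M^\i := \sum_{\i'<\i}\delta\alp(\Phi(\Om^{\i'},\cdot)-\psi(\Om^{\i'}))$ is the heart of the estimate: its increments are mean-zero conditionally on the past, so $\E\norm{M^\i}_\infty$ is controlled by a maximal inequality plus a bound on the conditional second moment of each increment. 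Here is where Assumption~\ref{ass_main:2} enters: the variance of a single normalized increment of $\qjl$ is of order $\bigl(\tfrac{\lr}{\hids\inp\,\delta\alp}\bigr)^2 + \bigl(\tfrac{\lr^2}{\hids^2\inp\,\delta\alp}\bigr)^2$ times moments of the local fields and of $\Er$, and the lower bound $\delta\alp\gtrsim\max(\lr/(\hids\inp),\lr^2/(\hids^2\inp))$ guarantees these prefactors are $\lesssim \delta\alp \cdot \tfrac{1}{\hids\inp}$ and $\lesssim \delta\alp\cdot\tfrac{1}{\hids^2\inp}$ respectively — in particular summable. Crucially one also needs $\E[\Er^4]$ and $\E[(\lf_j)^4]$ to be $O(1)$: since $\act$ is $L$-Lipschitz (Assumption~\ref{ass_main:1}) and the local fields are Gaussian with covariance $\Om^\i$ which stays bounded, all these moments are uniformly controlled, and $\noisevar$ has finite moments by hypothesis. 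The concentration $\lVert\x\rVert^2=\inp$ is replaced by its exact value with an error that is itself a martingale increment of the right order (this is the point promised in the text).

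Summing the recursion and applying discrete Grönwall gives, for $\i\le\lfloor\T/\delta\alp\rfloor$,
\begin{equation*}
\E\,\norm{\Om^\i-\bar\Om_\i}_\infty \;\lesssim\; e^{C\T}\Bigl(\sqrt{\textstyle\sum_{\i'<\i}(\delta\alp)^2\,\sigma_{\i'}^2}\; +\; \lfloor\T/\delta\alp\rfloor\cdot O((\delta\alp)^2)\Bigr)\,,
\end{equation*}
and the martingale sum is $\sqrt{(\text{number of steps})\times(\delta\alp)^2\times O(1/(\hids\inp))} = \sqrt{\T\,\delta\alp/(\hids\inp)}\cdot O(1)$, while the discretization term is $O(\delta\alp)$. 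Both are $\lesssim\sqrt{\delta\alp}$ (the first because $1/(\hids\inp)\to 0$, the second trivially). The $\log(\hids)$ factor arises from taking the maximum over the $(\hids+\hidt)^2$ matrix entries in the $\norm{\cdot}_\infty$ norm combined with the sub-exponential (squared-Gaussian) tails of the increments: a union bound / Orlicz-norm argument over $O(\hids^2)$ coordinates costs a $\sqrt{\log\hids}$, and the squared nature of terms like $\Er_j\Er_l$ upgrades this to $\log\hids$. The main obstacle I anticipate is the bookkeeping needed to show the per-step variance is genuinely $O(\delta\alp\cdot\hids^{-1}\inp^{-1})$ \emph{uniformly} over the trajectory — this requires propagating a deterministic a priori bound on $\norm{\Om^\i}_\infty$ (hence on all local-field moments) along the dynamics, which must be done simultaneously with the Grönwall argument rather than assumed; the stated hypothesis "provided the solution stays bounded" is what lets one close this loop.
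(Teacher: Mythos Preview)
Your overall architecture (Euler comparison, martingale increment plus Lipschitz drift plus discretization error, discrete Grönwall) matches the paper's, which in fact imports this skeleton wholesale from a lemma of Wang~(2018). Where your proposal diverges from the paper is precisely at the point you flag as ``the main obstacle'': controlling the local-field moments, i.e.\ $q_{jj}^\i$, uniformly along the stochastic trajectory. You propose to close this by a bootstrap---use that $\bar\Om$ stays bounded and that $\Om^\i$ is being shown close to $\bar\Om_\i$, hence $\Om^\i$ is bounded too. The paper does \emph{not} do this. It instead proves, by a direct one-step recursion independent of the ODE, the moment bound
\[
\E\bigl[(q_{jj}^\i)^t\bigr] \leq \Bigl(C(\T) + \tfrac{c\,t}{S}\Bigr)^{t}
\]
for all $t\geq 0$, and then invokes it with $t = 4\log(\hids)$ to control $\E[\max_\ell (q_{\ell\ell}^\i)^4]$. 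This is the actual technical content of the proof (the paper describes it as the ``main challenge'' and as what fixes a gap in \cite{goldt_2019}), and your proposal does not contain an equivalent step. The bootstrap you sketch is circular unless carried out via a stopping-time argument, and even then you would need a tail bound on $\Om^\i - \bar\Om_\i$ rather than only the expectation bound you are proving; you have not indicated how to break that circle.

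Two smaller discrepancies. First, your variance bookkeeping is off: by Assumption~\ref{ass_main:2} one has $\tfrac{\lr}{\hids\inp\,\delta\alp}\leq 1/c$, so the conditional variance of one normalized increment $\Phi$ is $O(1)$ (times local-field moments), giving a martingale contribution of order $\sqrt{\T\,\delta\alp}$ directly---not $\sqrt{\T\,\delta\alp/(\hids\inp)}$ as you write. Second, your account of where $\log(\hids)$ comes from (union bound over $\hids^2$ matrix entries) is not the paper's mechanism. In the paper the $\log(\hids)$ enters \emph{inside} the per-step variance bound, through $\E_\i[(\max_\ell \lf_\ell)^4]\lesssim \log(\hids)^2(\max_\ell q_{\ell\ell}^\i)^2$ together with the moment bound above evaluated at $t=4\log(\hids)$; after that, the max over entries $(i,j)$ is free. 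Your route might also produce a logarithmic factor, but it is not the same argument.
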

Our proof is based on techniques introduced in \cite{wang_2018} (namely, their Lemma 2) which studies a different problem with related proof techniques. The proof involves decomposing $\Om^{\i+1}$ as 
\begin{equation}
     \Om^{\i+1} = \Om^\i +  \delta \alp \; \psi(\Om^\i) + \left(\Om^{\i+1} - \Om^\i - \delta \alp \psi(\Om^\i)\right) \;,
\end{equation}
where the two first terms can be considered as a deterministic discrete process, and the last term is a martingale increment. The main challenge lies in showing that the martingale contribution stays bounded throughout the considered time period. 

Although the method is similar to \cite{goldt_2019}, there are a number of differences between the two approaches. First, our proof fixes a number of holes in \cite{goldt_2019}, in particular bounding $q_{jj}^\i$ by a sufficiently slowly diverging function of $\i$. Additionally, the techniques used in this paper yield a dependency in $\hids$ that is nearly negligible, while the previous methods imply bounds that are much too coarse for our needs. 

The function $\psi$ can be computed explicitly for various choices of $\act$, which allows to check Assumption \ref{ass_main:3} directly. We provide in Appendix \ref{app:expec} the necessary computations for $\act(x) = \erf(x/\sqrt{2})$; those for the ReLU unit can be found in \cite{yoshida_2017_statistical}. It can be checked that in the ReLU case, the function $\psi$ is not Lipschitz around the matrices $\Om$ satisfying
\[ \Omega_{jl} = \sqrt{\Omega_{jj} \Omega_{ll}}\]
for some $j \neq l$. However, in every case we have a weaker square-root-Lipschitz property: there exists $C \in  \mathbb R$ such that
\[ \lVert \psi(\Om) - \psi(\Om') \rVert \leq C \left\lVert \sqrt{\Om} - \sqrt{\Om'} \right\rVert \]
for any $\Om, \Om'$. Since the square root function is Lipschitz whenever the eigenvalues of $\Om$ are bounded away from zero (see e.g. \cite{delmoral_2018}), Assumption \ref{ass_main:3} is implied by the condition
\[ \Om^\i \succeq \epsilon I_{p+k} \; ; \]
however, this assumption is much stronger, and becomes unrealistic in the specialization phase (as well as when $p \gg d$).

Theorem \ref{th:conv_eps} allows us to safely navigate through Figure \ref{fig:phase_diagram} by keeping track of convergence rates of the discrete process to a set ODEs. The interplay between learning rate and hidden layer width defines the time scaling $\delta \alp$ and the trade-off between the linear contribution on $\Er_j$ and the quadratic one, playing a central role on whether the network achieves perfect learning or not. Specifically, consider the following learning rate and hidden layer width scaling with $\inp$:
\begin{subequations}
\begin{equation}
    \lr = \frac{\lr_0}{d^\del} \; ,
\end{equation}
\begin{equation}
    \hids = \hids_0  d^\kappa  \;,
\end{equation}
\end{subequations}
where $\lr_0 \in \R^+ $ and $\hids_0 \in \N$ are constants. The exponent $\del \in \R$ can be either greater or smaller than zero, while $\kappa \in \R^+$. Replacing these scalings on Eqs.~\eqref{eq:qm_up}, we find:
\begin{subequations}
\label{eq:qm_general}
\begin{equation}
\qjl^{\i+1} - \qjl^{\i} = \frac{1}{\inp^{1+\kap+\del}} \underbrace{\left(\Er_{j}^\i \lf_{l}^\i  + \Er_{l}^\i \lf_{j}^\i \right)}_{\text{learning}} + \frac{1}{\inp^{1+2(\kap+\del)}}
\underbrace{\Er_{j}^\i   \Er_{l}^\i}_{\text{noise}}    \;,
\end{equation}
\begin{equation}
\mjr^{\i+1} - \mjr^\i  = \frac{1}{\inp^{1+\kap+\del}} \underbrace{\Er_{j}^\i \lf_{r}^{*\i}}_{\text{learning}} \;,
\end{equation}
\end{subequations}
where we have chosen $\lr_0  = \hids_0 $ without loss of generality.

Since the distribution of the label noise $ \Prob (\noisevar) $ is such that $ \Expnoise [\noisevar] = 0  $, the linear contribution in $ \Er_j$ is noiseless in the high-dimensional limit $\inp\rightarrow\infty$, and therefore we will refer to it as the  {\it learning term}. The noise enters in the equations through the variance computed on the quadratic contribution $ \Er_j \Er_l $, which we will refer to as the {\it noise term}; intuitively, it is a high-dimensional variance correction which hinders learning. In order to satisfy \eqref{eq:lower_bound_deltat}, we shall take
\begin{equation}
    \label{eq:dt_max}
     \delta t = \max\left(\frac1{\inp^{1 + \kap + \del}}, \frac1{\inp^{1 + 2(\kap + \del)}} \right)  \;.
\end{equation}
When $\kap + \del \neq 0$, this implies that either the learning term or the noise term scale like a negative power of $\inp$, and is negligible with respect to the other term. It is then easy to check that at a finite time horizon $\T$, the resulting ODEs behave as if the negligible term was not present. We refer to Theorem \ref{th:ode_perturbation} in the appendix for a quantitative proof of this phenomenon. Let us now describe the different regimes depicted in Figure~\ref{fig:phase_diagram}.
\paragraph{Blue line (plateau) --} 
When $\lr$ and $\hids$ are scaled such that $\kap = -\del$,  Eqs.~\eqref{eq:qm_general} converge to 
\begin{subequations}
\label{eq:qmode0}
\begin{equation}
\label{eq:qode0}
    \dv{\qjl}{\alp_0} = \Explf  \left[  \Er_{j} \lf_{l} + \Er_{l} \lf_{j} \right] 
 + \Explf \Expnoise \left[  \Er_{j} \Er_{l}  \right]      \;,
\end{equation}
\begin{equation}
\dv{\mjr}{\alp_0}  =  \Explf \left[  \Er_j \lf_{r}^*  \right]\;,
\end{equation}
\end{subequations}
with $\delta \alp_0 \equiv 1 / \inp $. This regime is an extension of \cite{saad_1995} for which $\kap=\del=0$. The convergence rate to the ODEs scales with $\inp^{-1/2} \log(\inp)$, and the phenomenology we observe for $\kap=\del=0$ is consistent with previous works studying the setting $\kap=\del=0$; namely the existence of an asymptotic plateau proportional to the noise level. For instance, the asymptotic population risk $\risk_\infty$ is known to be proportional to $ \lr \noise$ \cite{goldt_2019} when $\kap=\del=0$ and the dynamics is driven by a rescaled version of Eqs.~\eqref{eq:qmode0}. Since the noise term does not vanish under this scaling, perfect learning to zero population risk is not possible. There is always an asymptotic plateau related to the noise level $\noise$, and the learning rate $\lr$. 

\paragraph{Green region (perfect learning) --} 
If $ \kap > -\del $ we can define the time scaling $\delta \alp_{\kap+\del} \equiv 1 / \inp^{1 + \kap+\del} $. By Theorem \ref{th:conv_eps}, Eqs.~\eqref{eq:qm_general} converge to the following deterministic set of ODEs: 
\begin{subequations}
\label{eq:ode_kappa}
\begin{equation}
\label{eq:qode_kappa}
 \dv{\qjl}{\alp_{\kap+\del}}  =
\Explf  \left[  \Er_{j} \lf_{l} + \Er_{l} \lf_{j} \right]  + \Or \left( \frac{ \Explf \Expnoise \left[  \Er_{j} \Er_{l}  \right]}{\inp^{\kap+\del}}     \right)   \;,
\end{equation}
\begin{equation}
\dv{\mjr}{\alp_{\kap+\del}}  = \Explf  \left[  \Er_j \lf_{r}^*  \right]\;,
\end{equation}
\end{subequations}
at a rate proportional to $ \inp^{-(1+\kap+\del)/2} \log(d) $, where we have highlighted that the noise term vanishes with $  \inp^{-(\kap+\del)} $. Hence, as long as $\kap > - \del$ the noise does not play any role on the dynamics. This setting could be understood by taking an effect learning rate $\lr_{\text{eff}} \propto d^{-\kap-\del} $ on $ \risk_\infty \propto \lr \noise $, which leads to zero population risk, i.e. perfect learning, in the high dimensional limit $\inp\to\infty$. We validate this claim by a finite size analysis in the next section.

As discussed, the time scaling determines the number of data samples required to complete one learning step on the continuous scale. The bigger $\kap+\del$, the more attenuated the noise term, thus the closer to perfect learning. The trade-off is that the bigger $\kap+\del$, the larger the number of samples needed is, since $\nsamp = \tau \inp^{1+\kap+\del}$. Given a realizable learning task, one would thus rather choose the parameters to attain the perfect learning region, but being as close as possible to the plateau line for not increasing too much the needed number of samples. We remark that \cite{mei_2019} provides an alternative deterministic approximation in this regime, with non-asymptotic bounds, whenever $\hids \gg 1$; this is the so-called mean-field approximation, with known convergence guarantees \cite{chizat_2018}.


\paragraph{Orange region (bad learning) --}
We now step in the unusual situation where the learning rate grows faster with $\inp$ than the hidden layer width: $\kap < -\del$. In this case, by \eqref{eq:dt_max} the noise term dominates over the dynamics. Defining the time scaling $\delta \alp_{2(\kap+\del)} \equiv 1 / \inp^{1+ 2(\kap+\del)} $, we have
\begin{subequations}
\label{eq:ode_delkappa}
\begin{equation}
\label{eq:qode_delkappa}
  \dv{\qjl}{\alp_{2(\kap+\del)}}  = \Explf \Expnoise \left[  \Er_{j} \Er_{l}  \right] 
+ \Or \left( \frac{ \Explf  \left[  \Er_{j} \lf_{l} + \Er_{l} \lf_{j} \right] }{\inp^{-(\kap + \del) }}     \right)  \;,
\end{equation}
\begin{equation}
\dv{\mjr}{\alp_{2(\kap+\del)}}  = \Or \left( \frac{\Explf  \left[  \Er_j \lf_{r}^*  \right] }{\inp^{-(\kap+\del)}} \right) \;.
\end{equation}
\end{subequations}

According to Theorem \ref{th:conv_eps} the convergence rate of Eqs.~\eqref{eq:qm_general} to Eqs.~\eqref{eq:ode_delkappa} scales with $\inp^{-(1/2+\kap+\del)} \log(\inp)$. Therefore the existence of the noisy ODEs above is circumscribed to the region 
\begin{equation}
    - \nicefrac{1}{2}  < \kap + \del < 0  \;,
\end{equation}
and presents a convergence trade-off absent in the other regimes:
the faster one of the contributions of Eqs.~\eqref{eq:qm_general} goes to zero, the worse is the convergence rate.
In the present case, the more the learning term is attenuated, i.e. the more negative is $\kap+\del$, the worse the dynamics is described by Eqs.~\eqref{eq:ode_delkappa}.
Although the weights are updated, the correlation between the teacher and the student weights parametrized by the overlap matrix $\M$ remains fixed on its initial value $\M^0$, which is a fixed point of the dynamics under this scaling. Unsurprisingly, this leads to  poor generalization capacity. 


\paragraph{Red region (no ODEs) --}
If $ \kap + \del < - \nicefrac{1}{2} $, the stochastic process driven by the weight dynamics does not converge to deterministic ODEs under the assumptions of Theorem \ref{th:conv_eps}. We are then not able to state any claim about this regime. 

\paragraph{Initialization and convergence --}
There are two additional features worth commenting on the high-dimensional dynamics and its connection to the mean-field/hydrodynamic approach, regarding initialization and the specialization transition. 

In the ODE approach we discuss here, we always observe a first plateau where the teacher-student overlaps are all the same. This means all the hidden layer neurons learned the same linear separator. At this point, the two-layer network is essentially linear. This is called a \emph{unspecialized} network in \cite{saad_1995_0,saad_1995}. In fact, this is a perfectly normal phenomenon, as with few samples even the Bayes-optimal solution would be unspecialized \cite{aubin2019committee}. Only by running the dynamics long enough the student hidden neurons start to \emph{specialize}, each of them learning a different sub-function so that the two-layer network can learn the non-trivial teacher.

Let us make two comments on this phenomenon: (i) while the "linear" learning in the unspecialized regime may remind the reader of the linear learning in the lazy regime \cite{jacot_2018,chizat_2019} of neural nets, the two phenomena are {\it completely} different. In  lazy training, the learning is linear because weights change very little, so that the effective network is a linear approximation of the initial one. Here, instead, the weights are changing {\it considerably}, but each hidden neuron learns essentially the same function.    
(ii) If the ODEs are initialized with weights uncorrelated with the teacher, then the unspecialized regime is a fixed point of the ODEs: the student thus never specializes, at any time. Strikingly, such condition arises as well in the analysis of mean-field equations (see e.g. Theorem 2 in \cite{bach2021gradient} that discusses the need to have \emph{spread} initial conditions with a non-zero overlap with the teacher) to guarantee global convergence. 

This raises the question about the precise dependence of the learning on the initialization condition in the high-dimensional regime, where a random start gets a vanishing ($1/\sqrt{\inp}$) overlap. This is a challenging problem that only recently has been studied (though in a simpler setting) in \cite{tan2019phase, arous2021online,arous2020algorithmic} who showed it yields an additional $\log (\inp)$ time-dependence. Generalizing these results for high-dimensional two-layer nets is an open question which we leave for future work.

\section{Discussion, special cases, and simulations}

To illustrate the phase diagram of Figure~\ref{fig:phase_diagram}, we present now several special cases for which we can perform simulations or numerically solve the set of ODEs. Henceforth, we take $\act(x) \!=\! \erf( x / \sqrt{2}  )$, for which the expectations of the ODEs and of the population risk, Eq.~\eqref{eq:pop_risk}, can be calculated analytically \cite{saad_1995}. The explicit expressions are presented in Appendix~\ref{app:expec}. Teacher weights are such that $\p_{rs} = \delta_{rs}$. The initial student weights are chosen such that the dimension $\inp$ can be varied without changing the initial conditions $ \Q^0$, $\M^0$, $\P$ and consequently the initial population risk $\risk_0$. A detailed discussion can be found in Appendix \ref{app:init_cond}.

\subsection{Saad \& Solla scaling \texorpdfstring{$\kap =\del= 0$}{K = D = 0} }
We start by recalling the well-known setting characterized by the point $\kap =\del= 0 $. The convergence of the stochastic process for fixed learning rate and hidden layer width to Eqs.~\eqref{eq:qmode0} was first obtained heuristically by \cite{saad_1995}. In Figure \ref{fig:dscale} we recall this classical result by plotting the population risk dynamics for different noise levels. Dots represent simulations, while solid lines are obtained by integration of the ODEs, Eq.~\eqref{eq:qmode0}.

\begin{figure}[tb!]
\begin{center}
\centerline{\includegraphics[width=0.45\textwidth]{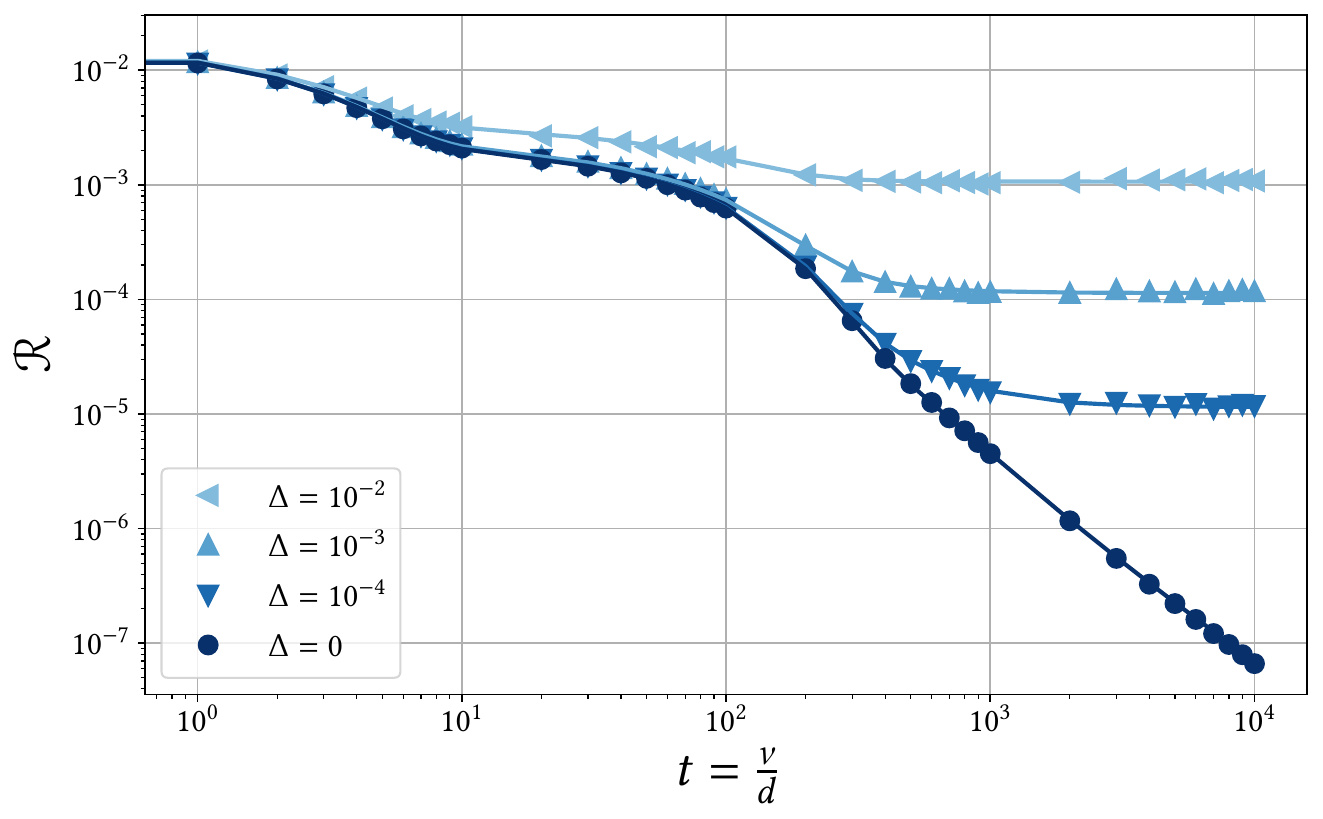}}
\vskip -0.05in
\caption{ Population risk dynamics for $\kap=\del=0$ (Saad \& Solla scaling) : $\hids_0 = 8$, $\hidt=4$, $\prs = \delta_{rs}$. Activation: $\act(x) = \erf(x/\sqrt{2})$. Data distribution: $\Prob (\x) =  \gauss(\x | \bm{0}, \Id )$. Dots represent simulations ($\inp = 1000$), while solid lines are obtained by integration of the ODEs given by Eqs.~\eqref{eq:qmode0}.}
\label{fig:dscale}
\end{center}
\vskip -0.2in
\end{figure}

Learning is characterized by two phases after the initial decay. The first is the unspecialized plateau where all the teacher-student overlaps are approximately the same: $\mjr \approx m $. Waiting long enough, the dynamics reaches the {\it specialization} phase, where the student neurons start to {\it specialize}, i.e., their overlaps with one of the teacher neurons increase and consequently the population risk decreases. This specialization is discussed extensively in \cite{saad_1995}. If $\Delta = 0$, the population risk goes asymptotically to zero. Instead, if $\Delta \ne 0$, the specialization phase presents a second plateau related to the noise $\Delta$.

    The asymptotic population risk $\risk_\infty$ related to the second plateau is proportional to $ \lr \noise$ \cite{goldt_2019} in the high-dimensional limit $\inp \to\infty$ with $\hids$ finite. As mentioned in the previous section, the expectation over  $\Er_j \Er_l$ in Eq.~\eqref{eq:qode0} prevents one from obtaining zero population risk for a noisy teacher. 

\subsection{Perfect learning for \texorpdfstring{$\kap = 0$}{K=0}}
In this section we study the line $\kap = 0$ with $\del >0 $ of Figure \ref{fig:phase_diagram}, for which Eqs.~\eqref{eq:ode_kappa} with $\kap=0$ hold. We show that perfect learning can be asymptotically achieved in the realizable setting for any finite hidden layer width $\hids = \hids_0$. Keeping $\del$ and $\noise$ fixed, we have done simulations increasing the input layer dimension $\inp$. In Figure \ref{fig:d32_vary_d} we set $\del = 1/2$, $\noise = 10^{-3}$ and vary the input layer dimension. The bigger $\inp$ is, the closer we are to the ODE-derived noiseless result.

Gathering the asymptotic population risk from simulations for varying $\inp$ and $\noise$ we perform a finite-size analysis to study the  dependence of $ \risk_\infty$ with $\inp$. This shows that the noise term goes to zero under this setting. In Figure \ref{fig:d32_finite_size} we plot $\risk_\infty$ versus $\inp$ from simulations (dots) for different noise levels. We fit lines under the log-log scale showing that $ \risk_\infty \propto \inp^{-\del}$, as expected. Figure \ref{fig:deps_scal_eps025} draws the same conclusion for $\del=1/4$.

\begin{figure*}[tb!]
\centering
\begin{subfigure}[h]{.45\textwidth}
  \centering
  \includegraphics[width=\linewidth]{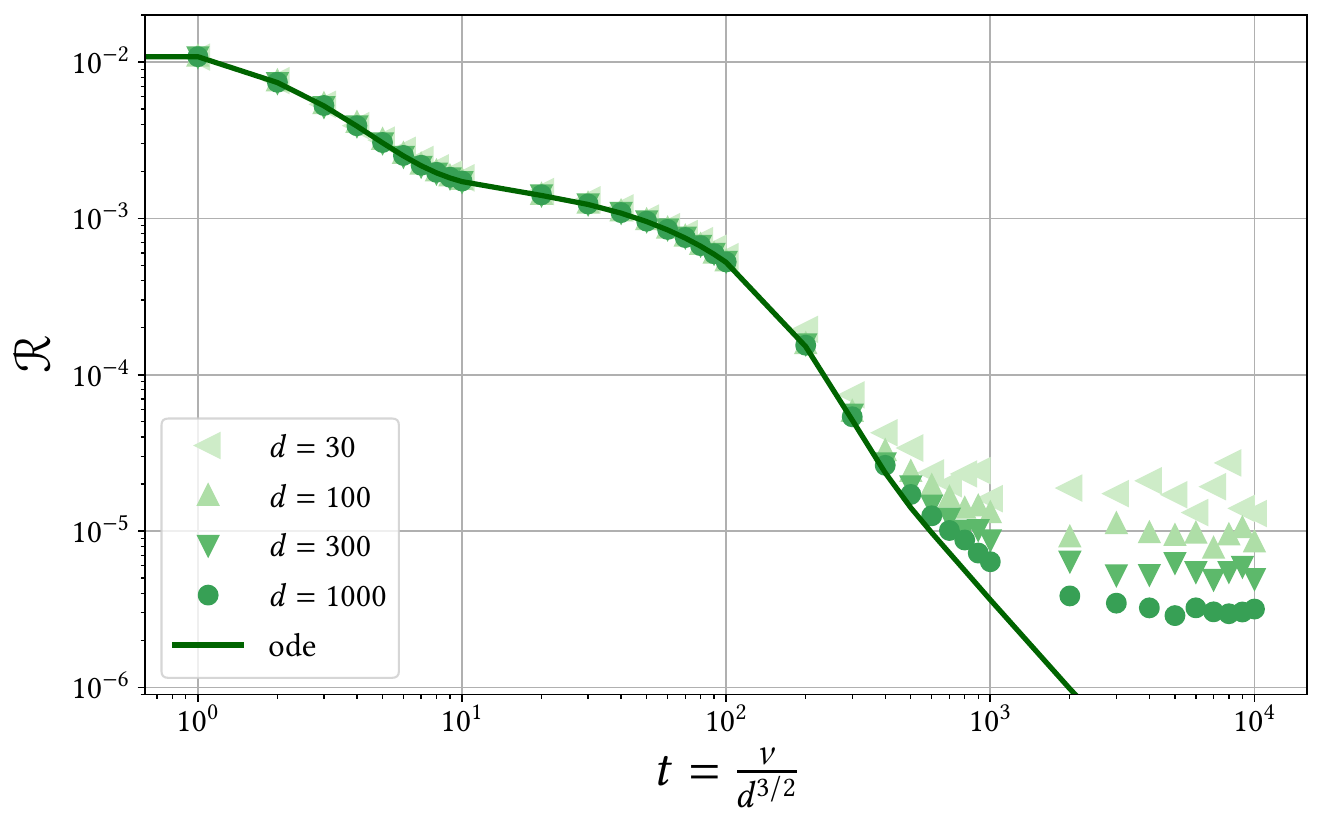}
  \vskip -0.05in
  \caption{Population risk dynamics for $\kap=0$ and $\del=1/2$. Fixed noise $\noise = 10^{-3}$ and varying $\inp$. Dots represent simulations, while the solid line is obtained by integration of the ODEs given by Eqs.~\eqref{eq:ode_kappa}. The data are compatible with the claim that as $\inp \to \infty$ the curve converges to zero population risk.}
  \label{fig:d32_vary_d}
\end{subfigure}%
\hspace{0.5cm}
\begin{subfigure}[h]{.45\textwidth}
  \centering
  \includegraphics[width=\linewidth]{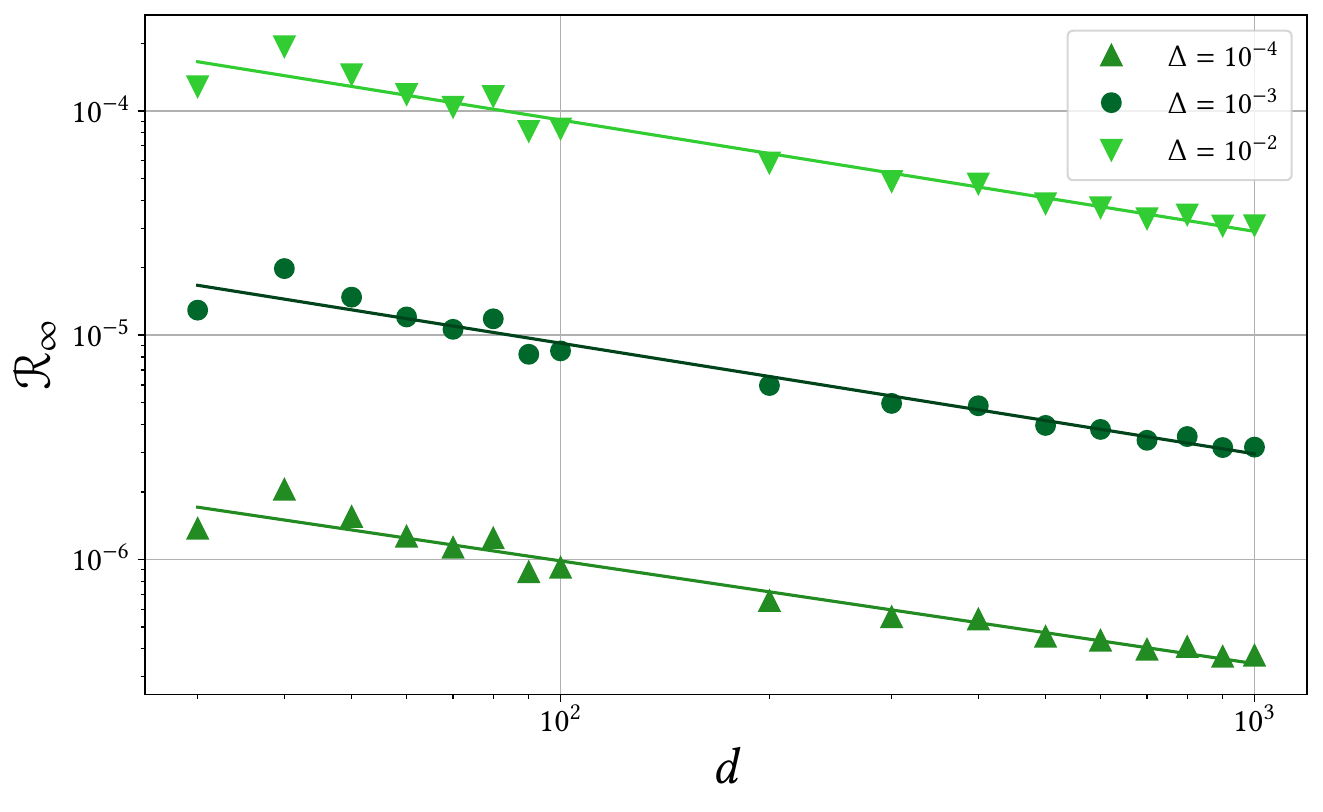}
  \vskip -0.05in
  \caption{Asymptotic population risk $\risk_\infty$ from simulations (dots) as a function of $\inp$ for different noise levels under the scaling $\kap=0$ and $\del=1/2$. The fitted straight lines have slopes $-0.458$, $-0.494$,  $-0.497$, for $\Delta = 10^{-4} , 10^{-3} , 10^{-2}$, respectively.}
  \label{fig:d32_finite_size}
\end{subfigure}
\vskip -0.05in
\caption{Network parameters:  $\hids_0 =8$, $\hidt=4$, $\prs = \delta_{rs}$. Activation function: $\act(x) = \erf(x/\sqrt{2})$. Data distribution: $\Prob (\x) =  \gauss(\x | \bm{0}, \Id )$ .}
\label{fig:deps_scal_eps050}
\end{figure*}

\begin{figure*}[tb!]
\centering
\begin{subfigure}[h]{.45\textwidth}
  \centering
  \includegraphics[width=\linewidth]{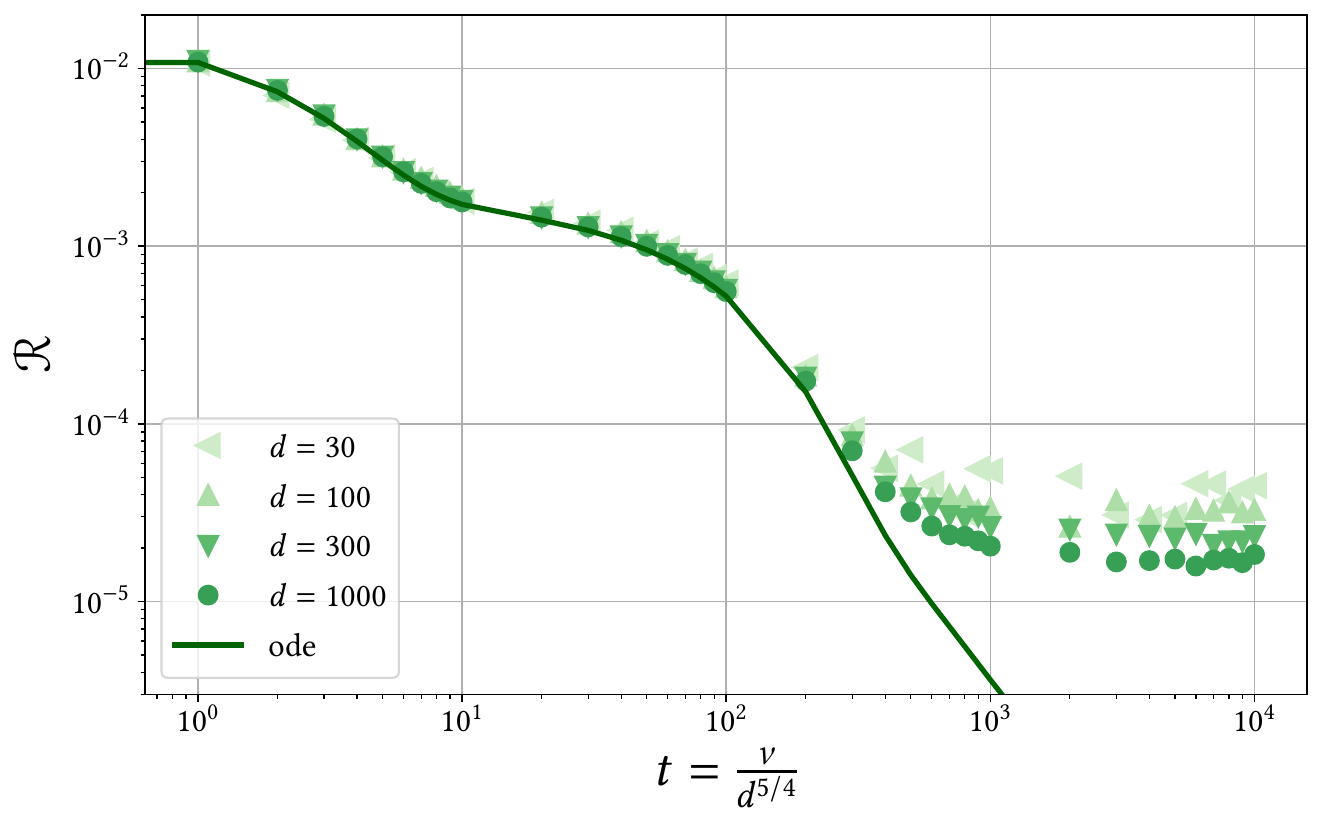}
  \vskip -0.05in
  \caption{Population risk dynamics for $\kap=0$ and $\del=1/4$. Fixed noise $\noise = 10^{-3}$ and varying $\inp$. Dots represent simulations, while the solid line is obtained by integration of the ODEs given by Eqs.~\eqref{eq:ode_kappa}. The data are compatible with the claim that as $\inp \to \infty$ the curve converges to zero population risk.}
  \label{fig:d32_vary_d_eps025}
\end{subfigure}%
\hspace{0.5cm}
\begin{subfigure}[h]{.45\textwidth}
  \centering
  \includegraphics[width=\linewidth]{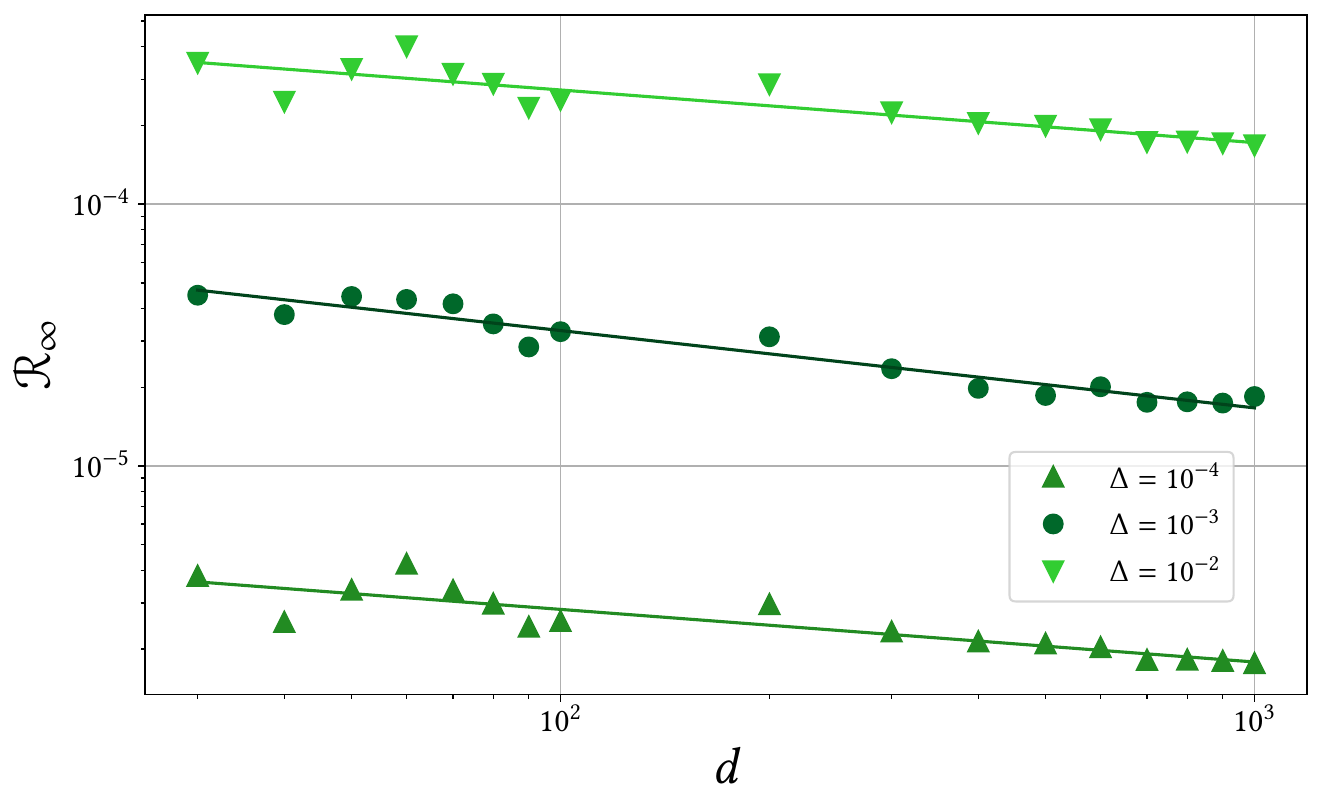}
  \vskip -0.05in
  \caption{Asymptotic population risk $\risk_\infty$ from simulations (dots) as a function of $\inp$ for different noise levels under the scaling $\kap=0$ and $\del=1/4$.The fitted straight lines have slopes $-0.201$, $-0.295$,  $-0.201$, for $\Delta = 10^{-4} , 10^{-3} , 10^{-2}$, respectively.}
 \label{fig:d32_finite_size_eps025}
\end{subfigure}
\vskip -0.05in
\caption{Network parameters: $\hids_0 =8$, $\hidt=4$, $\prs = \delta_{rs}$. Activation function: $\act(x) = \erf(x/\sqrt{2})$. Data distribution: $\Prob (\x) =  \gauss(\x | \bm{0}, \Id )$ .}
\label{fig:deps_scal_eps025}
\end{figure*}

As already stated, the interplay between the exponents directly affects the time scale. We end this subsection by graphically illustrating this fact through simulations. Setting the noise to $\noise = 10^{-3}$ we compare the cases $\del =  0, 1/4, 3/8, 1/2$ in Figure~\ref{fig:plot_d_d32_compare}. All simulations are rendered on the scale $ \delta \alp_0 = 1 / \inp  $ to illustrate the trade-off between asymptotic performance and training time. 


\subsection{Bad learning for \texorpdfstring{$\kap = 0$}{K=0}}
We now quickly discuss the uncommon case of $\lr$ growing with $\inp$ within the orange region. In Figure \ref{fig:plot_kap0_onlyI4} we compare simulations varying $\inp$ with the solution of the ODEs given by Eqs.~\eqref{eq:ode_delkappa}. Both lead to poor results compared to the green and blue regions. Moreover, this regime presents strong finite-size effects, making it harder to observe the asymptotic ODEs at small sizes. However, the trend as $\inp$ increases is very clear from the simulations. As discussed in Section \ref{sec:main}, the more the learning term is attenuated on the ODEs, the worse they describe the dynamics.


\begin{figure*}[tb!]
\centering
\begin{subfigure}[h]{.45\textwidth}
  \centering
  \includegraphics[width=\linewidth]{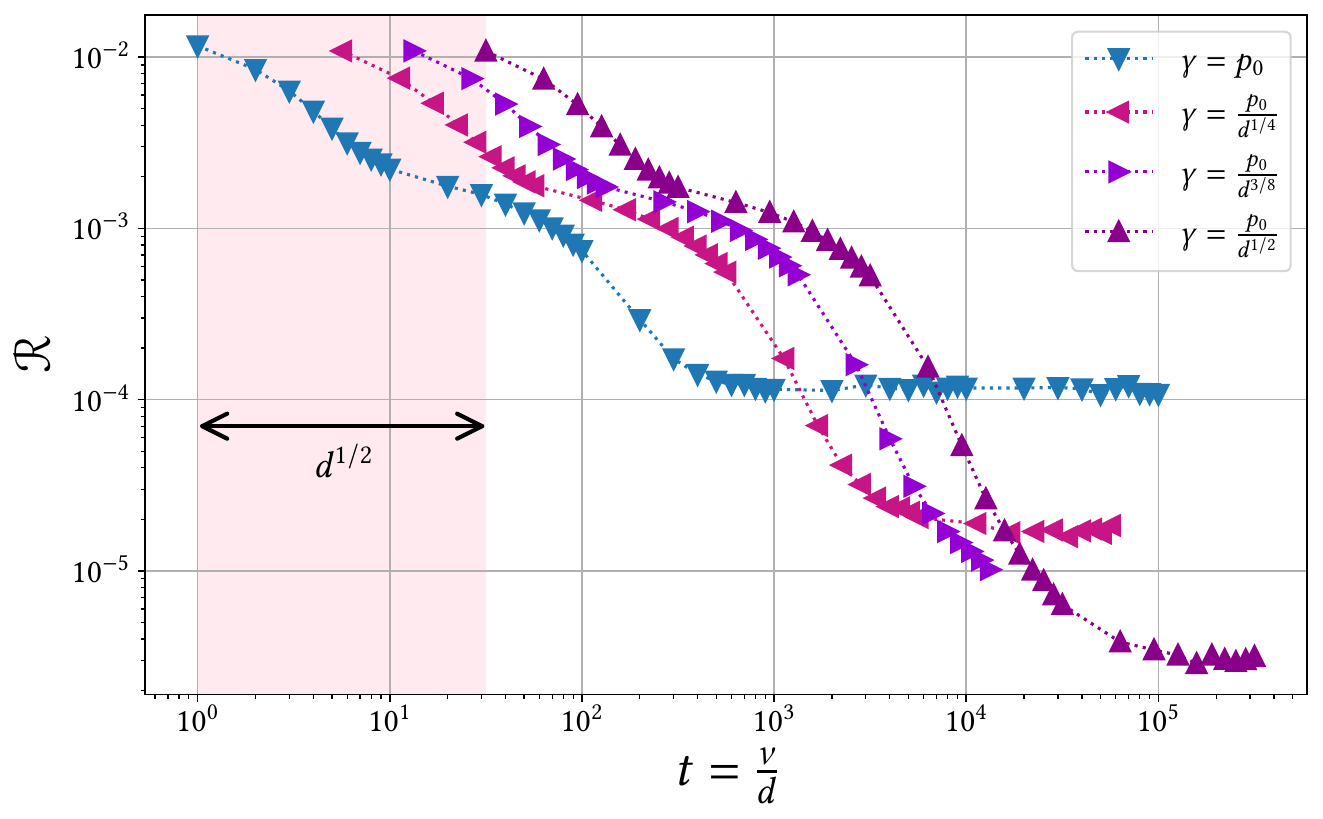}
  \vskip -0.05in    
  \caption{Simulations ($\inp = 1000$) for $\kap=0$ comparing different choices of the exponent $\del$. The final plateau is proportional to learning rate: $\risk_\infty \propto \lr \noise $.}
  \label{fig:plot_d_d32_compare}
\end{subfigure}%
\hspace{0.5cm}
\begin{subfigure}[h]{.45\textwidth}
  \centering
  \includegraphics[width=\linewidth]{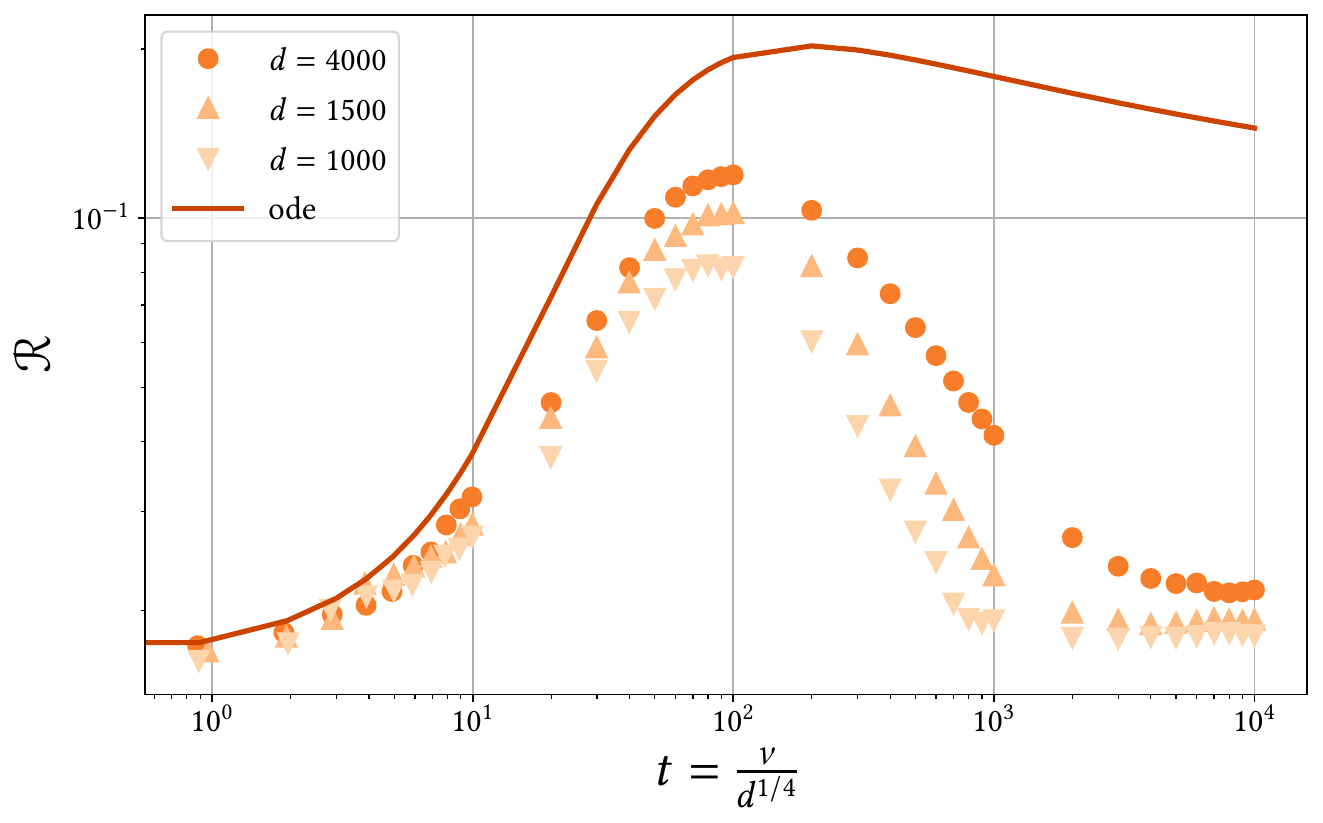}
  \vskip -0.05in
  \caption{Population risk dynamics for $\kap = 0$ and $\del = -3/8$. Dots represent simulations, while the solid line is obtained by integration of the ODEs given by Eqs.~\eqref{eq:ode_delkappa}.}
  \label{fig:plot_kap0_onlyI4}
\end{subfigure}
\vskip -0.05in
\caption{Network parameters $\hids_0 =8$, $\hidt=4$, $\prs = \delta_{rs}$. Noise level $\noise = 10^{-3}$. Activation: $\act(x) = \erf(x/\sqrt{2})$. Data distribution: $\Prob (\x) =  \gauss(\x | \bm{0}, \Id )$ .}
\end{figure*}

\subsection{Large hidden layer: \texorpdfstring{$\kap > 0$}{K > 0}}
Finishing our voyage through Figure \ref{fig:phase_diagram} with examples, we briefly discuss the case where both input and hidden layer widths are large. Although Theorem \ref{th:conv_eps} provides non-asymptotic guarantees for $\kap > 0$, the number of coupled ODEs grows quadratically with $\hids$, making the task of solving them rather challenging. Thus, we present simulations that illustrate the regions of Figure \ref{fig:phase_diagram}. Fixing $\inp=100$ we show in Figure \ref{fig:simul_phdiagr} learning curves for different values of $\kap$ and $\del$. The colors are chosen to match their respective regions in the phase diagram. 

Due to the relatively small sizes used in Figure \ref{fig:simul_phdiagr}, the green dots seem to decrease towards perfect learning, even when $\del < 0$, provided that $\kap$ is large enough, as is predicted by the phase diagram in Figure \ref{fig:phase_diagram}. Moreover, since $\inp$ is not large enough, when the parameters are within the orange region the finite-size effects actually dominates, similarly to Figure \ref{fig:plot_kap0_onlyI4}. The learning contribution still plays a role and the asymptotic population risk is similar to the case $\kap=\del=0$. Within the red region, which is out of scope of our theory, the simulation gets stuck on a plateau with larger population risk.

\begin{figure}[tb!]
\begin{center}
\centerline{\includegraphics[width=0.45\textwidth]{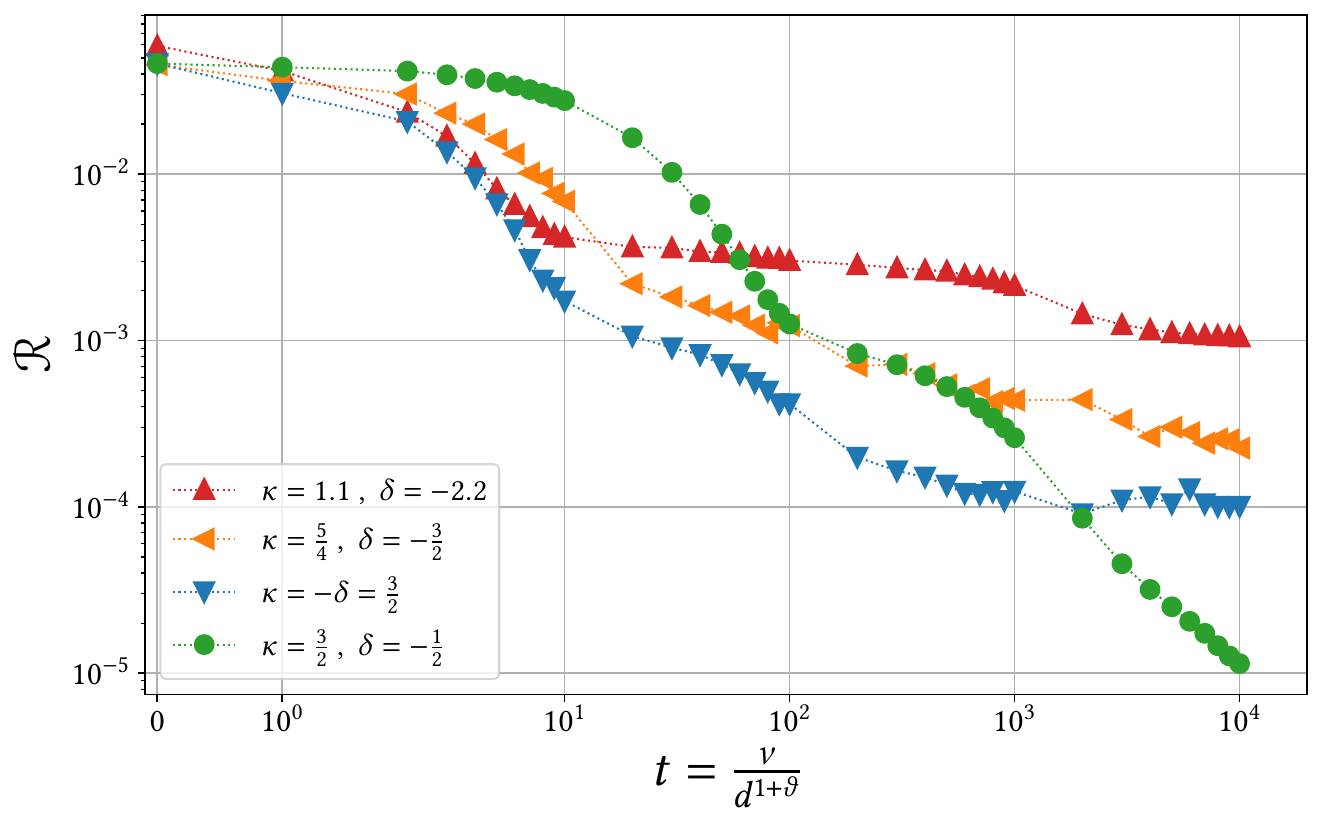}}
\vskip -0.05in
\caption{Simulations across different regions of Figure \ref{fig:phase_diagram}. Networks parameters $\inp=100$, $\hids=\inp^\kap$, $\lr= \inp^{-\del}$, $\hidt=4$, $\prs = \delta_{rs}$. Noise: $\noise = 10^{-3}$. Activation function: $\act(x) = \erf(x/\sqrt{2})$. Data distribution: $\Prob (\x) =  \gauss(\x | \bm{0}, \Id )$. Time scaling: $\vartheta = \kap+\delta$ for green and blue and $\vartheta = 2(\kap+\delta)$ for orange. The colors match Figure \ref{fig:phase_diagram}.}
\label{fig:simul_phdiagr}
\end{center}
\vskip -0.2in
\end{figure}
\section{Conclusion}
Building up on classical statistical physics approaches and extending them to a broad range of learning rate, time scales, and hidden layer width, we rendered a sharp characterisation of the performance of SGD for two-layer neural networks in high-dimensions. Our phase diagram describes the possible learning scenarios, characterizing learning regimes which had not been addressed by previous classical works using ODEs. Crucially, our key conclusions do not rely on an explicit solution, as our theory allows the characterization of the learning dynamics {\it without} solving the system of ODEs. The introduction of scaling factors is non-trivial and has deep implications. Our generalized description enlightens the trade-off between learning rate and hidden layer width, which has also been crucial in the mean-field theories.
\section*{Acknowledgements}
We thank G\'erard Ben Arous, Lena{\"i}c Chizat, Maria Refinetti and Sebastian Goldt for discussions. We acknowledge funding from the ERC under the European Union’s Horizon 2020 Research and Innovation Program Grant Agreement 714608- SMiLe. RV was partially financed by the Coordena\c{c}\~{a}o de Aperfeiçoamento de Pessoal de N\'{i}vel Superior - Brasil (CAPES) - Finance Code 001. RV is grateful to EPFL and IdePHICS lab for their generous hospitality during the realization of this project.

\newpage
\appendix
\section*{Appendix}\label{sec:app}
\addcontentsline{toc}{section}{\nameref{sec:app}}
\section{Deterministic scaling limit of stochastic processes}\label{app:theorem}

In order to show the deterministic scaling of online SGD under a proper chosen time scale, we will make use of a convergence result by \cite{wang_2018, wang_2019}, which is adapted below in Theorem \ref{th:wang}.

\begin{theorem} [Deterministic scaling limit of stochastic processes] \label{th:wang} Consider a $\inp$-dimension discrete time stochastic process sequence, $   \{  \Om^\i  \; ; \; \i = 0, 1, 2, ..., [\S \T ] \}_{\S = 1, 2, ...}$ for some $ \T > 0$. The increment $\Om^{\i + 1} - \Om^\i $ is assumed to be decomposable into three parts, 
\begin{equation}
\label{eq:process_3parts}
    \Om^{\i + 1} - \Om^\i = \frac{1}{\S} \psi (  \Om^\i     ) + \bm{\Lambda}^\i + 
    \bm{\Gamma}^\i     \;,
\end{equation}
such that 
\begin{assump}
\label{A1}
The process $\tilde{\bm{\Lambda}}^\i \equiv \sum_{\nu' = 0}^\i \bm{\Lambda}^{\nu'}$ is a martingale and $\E \norm{ \bm{\Lambda}^\i }^{2}  \le C(\T)^2 / \S^{1 + \epsilon_1}$ for some $\epsilon_1 > 0$.
\end{assump}
\begin{assump}
\label{A2}
$ \E  \norm{ \bm{\Gamma}^\i  } \le C(\T) / \S^{1 + \epsilon_2}$ for some $\epsilon_2 > 0$.
\end{assump}
\begin{assump}
\label{A3}
 The function $\psi (  \Om )$ is Lipschitz, i.e, $ \norm{ \psi (  \Om ) - \psi (  \tilde{\Om} )  } \le C  \norm{ \Om-\tilde{\Om}}$ for any $\Om$ and $\tilde{\Om}$.
\end{assump}

Let $\Om(\alp)$, with $ 0 \le \alp \le \T $, be a continuous stochastic process such that $ \Om(\alp) = \bm{\Om}^\i$ with $\i = [\S\alp]$. Define the deterministic ODE
\begin{equation}
\label{eq:ODE_lemma}
   \dv{\alp}  \bar{\Om} (\alp) = \psi ( \bar{\Om}(\alp)   ) \;,
\end{equation}
with $  \bar{\bm{\Om}}(0)  =  \bar{\Om}_0  $.

Then, if assumptions \ref{A1} to \ref{A3} hold and assuming $  \E \norm{\Om^0 - \bar{\Om}_0 }  < C / \S^{\epsilon_3} $ for some $ \epsilon_3 > 0$ then we have for any finite $\S$:
\begin{equation}\label{eq:nonasymp_bound}
\E\norm*{\Om^\i -\bar{\Om} \left( \frac{\i}{\S} \right)  } \le C(\T) e^{c\T}   
     \S^{ -\min \{ \frac{1}{2} \epsilon_1 , \epsilon_2 , \epsilon_3 \}  }\;,
\end{equation}
where $\bar{\Om} (\cdot)$ is the solution of Eq.(\ref{eq:ODE_lemma}).

\begin{proof}
The reader interested in the proof is referred to the supplementary materials of \cite{wang_2018,wang_2019}.
\end{proof}

\end{theorem}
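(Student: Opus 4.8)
The plan is to prove Theorem~\ref{th:wang} by a discrete Gr\"onwall comparison between the stochastic recursion \eqref{eq:process_3parts} and the Euler discretization of the limiting ODE \eqref{eq:ODE_lemma}, with the martingale structure of Assumption~\ref{A1} supplying the halved exponent $\tfrac{1}{2}\epsilon_1$.

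First I would record that the ODE \eqref{eq:ODE_lemma} has a unique solution $\bar{\Om}(\alp)$ on $[0,\T]$: this is Cauchy--Lipschitz applied to the Lipschitz field $\psi$ of Assumption~\ref{A3}, and it ensures $\psi(\bar{\Om}(\cdot))$ is continuous, hence bounded by some $K$ on the compact interval. Setting $\e^\i \equiv \Om^\i - \bar{\Om}(\i/\S)$ and writing the one-step ODE increment as $\bar{\Om}((\i+1)/\S)-\bar{\Om}(\i/\S) = \tfrac{1}{\S}\psi(\bar{\Om}(\i/\S)) + \bm{D}^\i$ with remainder $\bm{D}^\i = \int_{\i/\S}^{(\i+1)/\S}\big[\psi(\bar{\Om}(s))-\psi(\bar{\Om}(\i/\S))\big]\dd s$, subtraction from \eqref{eq:process_3parts} yields the error recursion
\[ \e^{\i+1} = \e^\i + \tfrac{1}{\S}\big(\psi(\Om^\i)-\psi(\bar{\Om}(\i/\S))\big) + \bm{\Lambda}^\i + \big(\bm{\Gamma}^\i - \bm{D}^\i\big). \]

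Next I would unroll this recursion from $0$ to $\i$, apply the triangle inequality to the resulting sum, and only then take expectations, so that the four error sources decouple into a fixed forcing term plus the Gr\"onwall feedback. By Assumption~\ref{A3} the drift mismatch is controlled as $\tfrac{C}{\S}\sum_{\i'<\i}\E\norm{\e^{\i'}}$, which is the self-referential term. The quantitative inputs are then: (i) the martingale sum $\tilde{\bm{\Lambda}}^{\i-1}=\sum_{\i'<\i}\bm{\Lambda}^{\i'}$, whose second moment I bound by orthogonality of martingale increments, $\E\norm{\tilde{\bm{\Lambda}}^{\i-1}}^2 = \sum_{\i'<\i}\E\norm{\bm{\Lambda}^{\i'}}^2 \le \S\T\, C(\T)^2/\S^{1+\epsilon_1}$, whence $\E\norm{\tilde{\bm{\Lambda}}^{\i-1}}\le\sqrt{\E\norm{\tilde{\bm{\Lambda}}^{\i-1}}^2}\le C(\T)\sqrt{\T}\,\S^{-\epsilon_1/2}$ — the source of the halved exponent; (ii) the $\bm{\Gamma}$ part, summed termwise by Assumption~\ref{A2} to $C(\T)\T\,\S^{-\epsilon_2}$; (iii) the initial gap $\E\norm{\e^0}\le C\S^{-\epsilon_3}$; and (iv) the discretization remainder, which by Lipschitzness and $\norm{\bar{\Om}(s)-\bar{\Om}(\i'/\S)}\le K/\S$ obeys $\norm{\bm{D}^{\i'}}\le CK/\S^2$, summing to $O(1/\S)$ and hence subdominant. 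Collecting (ii)--(iv) and the martingale bound into a constant forcing term $b\le C(\T)\,\S^{-\min\{\epsilon_1/2,\epsilon_2,\epsilon_3\}}$ gives $\E\norm{\e^\i}\le b + \tfrac{C}{\S}\sum_{\i'<\i}\E\norm{\e^{\i'}}$.

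I would close with the discrete Gr\"onwall inequality: from $a_\i\le b + \tfrac{C}{\S}\sum_{\i'<\i}a_{\i'}$ with $b$ independent of $\i$ one gets $a_\i\le b\,(1+C/\S)^\i\le b\,e^{C\i/\S}\le b\,e^{C\T}$ for $\i\le\lfloor\S\T\rfloor$, which is the claimed $e^{c\T}$ prefactor. The main obstacle I anticipate is the handling of the martingale term: one must avoid bounding $\sum_{\i'<\i}\E\norm{\bm{\Lambda}^{\i'}}$ termwise, which would produce the useless exponent $(\epsilon_1-1)/2$ and can even blow up, and instead control the $L^2$ norm of the \emph{whole} sum via orthogonality of increments, following Lemma~2 of \cite{wang_2018}; checking $\E\langle\bm{\Lambda}^i,\bm{\Lambda}^j\rangle=0$ for $i\neq j$ uses the martingale property of Assumption~\ref{A1} together with the $\mathcal{F}_{j-1}$-measurability of the earlier increment. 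A secondary subtlety is that the Gr\"onwall feedback couples the martingale and drift contributions, so the decoupling into a fixed $b$ must be performed on the unrolled sum before taking expectations term by term.
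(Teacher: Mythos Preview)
The paper does not actually prove this theorem; its entire proof is the sentence ``The reader interested in the proof is referred to the supplementary materials of \cite{wang_2018,wang_2019}.'' Your proposal correctly reconstructs that argument (Lemma~2 of \cite{wang_2018}): unroll the error recursion, keep the accumulated martingale $\tilde{\bm{\Lambda}}^{\i-1}$ intact before taking expectations so that orthogonality of increments yields the $\S^{-\epsilon_1/2}$ rate, bound the $\bm{\Gamma}$ and ODE-discretization sums termwise, and close with discrete Gr\"onwall. The only point to watch is that the identity $\E\norm{\tilde{\bm{\Lambda}}}^2=\sum_{\i'}\E\norm{\bm{\Lambda}^{\i'}}^2$ you invoke is the Pythagorean identity for martingale increments and presupposes a Hilbert-space norm; when the paper later instantiates $\norm{\cdot}=\norm{\cdot}_\infty$ in Theorem~\ref{th:conv_eps}, this step must be done coordinatewise and then maximized, which is where the extra $\log(\hids)$ factor there comes from.
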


Although the theorem wasn't originally proven in the $p \to \infty$ setting, a glance at its proof shows that it still holds upon replacing $C(\tau)$ by $C(p, \tau)$ in Assumption \ref{A1} and \ref{A2}, as well as Equation \eqref{eq:nonasymp_bound}. We choose $\norm{\cdot}$ to be the $L^\infty$ norm, since it suits better the $p \to \infty$ scaling. The $S$ in Theorem \ref{th:wang} corresponds to $1 / \delta t$, where $\delta t$ is defined in Theorem \ref{th:conv_eps}.

Following \cite{wang_2018}, we define for $j, l \in [p]$
\[ \Psi_{jl}(\Om; \x) = \frac{\lr}{\hids\inp \, \delta \alp} \left( \Er_{j}^\i \lf_{l}^\i  + \Er_{l}^\i \lf_{j}^\i  \right)+ \frac{\lr^2 }{\hids^2\,\inp\, \delta \alp}\Er_{j}^\i \Er_{l}^\i, \]
and
\[ \psi_{jl}(\Om) = \Expxgauss\left[ \Psi_{jl}(\Om; \x ) \right]. \]
The functions $\Psi, \psi$ are similarly defined on $[p] \times [p+1, p+k]$. With that, we write
\[ \Om^{\i+1} - \Om^\i = \frac1S \psi(\Om) + \underbrace{\frac1S\left(\Psi(\Om^\i; \x ) - \psi(\Om^\i)\right)}_{\bm\Lambda^\i} + \bm\Gamma^\i, \]
where for $j, l \in [p]$
\[ \Gamma^\i_{jl} = \frac{\gamma^2}{p^2d^2} \left( \lVert \x \rVert_2^2 - d\right) \Er_j^\i\Er_l^\i. \]

The main obstacle to bounding $\bm\Lambda^\i$ and $\bm \Gamma^\i$ is the fact that the $q_{jj}$ can a priori diverge to infinity. Our first task is therefore to show that this does not happen; as a proxy we show a subgaussian-like moment bound:
\[ \E\left[(q_{jj}^\i)^t\right] \leq \left(C(\T) + \frac{c t}{S}\right)^t. \]

Equipped with the above bound, controlling 
$ \E \norm{ \bm{\Lambda}^\i }^{2} $
and
$\E  \norm{ \bm{\Gamma}^\i  } $ 
becomes fairly easy. All proof details are in the below sections.

\subsection{Preliminaries: bounding the \texorpdfstring{$q_{jj}$}{qjj}}

Since $\act$ is $L$-Lipschitz, we have by the Cauchy-Schwarz inequality
\begin{equation}\label{eq:app:bound_error}
(\Er^\i)^2 \leq \frac {3L^2} \hidt \sum_{r=1}^\hidt (\lf_r^*)^2 + \frac {3L^2} \hids \sum_{j=1}^\hids (\lf_j)^2 + 3\noise \noisevar^2 \equiv \Phi^\i
\end{equation}
Define
\[ s^\i = \E\Phi ^\i =  \frac {3L^2} \hidt \sum_{r=1}^\hidt \p_{rr} + \frac {3L^2} \hids \sum_{j=1}^\hids \q_{jj}^\i + 3\noise\]
Assumption \ref{ass_main:2} in Theorem \ref{th:conv_eps} implies that
\[ |\q_{jj}^{\i+1} - \q_{jj}^\i| \leq \frac 1 S \left(c_1 (\lambda_j^\i)^2 + c_2 (\Er^\i)^2 \right) \]
where $c_1, c_2$ are absolute constants. Summing those inequalities yield
\[ |s_{\i+1} - s^{\i}| \leq \frac{c_3}S \Phi^\i,  \]
and finally
\[ \E_\i[s^{\i+1}] \leq s^\i \left(1 + \frac{c_3}{S}\right) \leq s^\i e^{c_3/S}. \]
As a result, we have for any $0 \leq \i \leq S \tau$
\begin{equation} 
\E[s^\i] \leq c_4 e^{c_3 \tau}.
\end{equation}
For simplicity, let $q^\nu$ denote any of the $q_{jj}^\nu$. We have, for all $t \geq 0$, 
\[ (\q^{\i+1})^t - (\q^{\i})^t = t (\q^{\i})^{t-1} (\q^{\i+1} - q^\i) + O\left(\frac{t^2}{S^2}\right), \]
where the remainder term has bounded expectation. Again, we write
\[ \left|(\q^{\i+1})^t - (\q^{\i})^t \right| \leq t (\q^{\i})^{t-1} \frac1S(c_1 (\Er^\i)^2 + c_2 (\lf^\i_i)^2) + \frac{c_5 t^2}{S^2}. \]
By Assumption \ref{ass_main:3}, the $q_{ii}^\i$ are bounded from below by a constant, hence
\[ \E_\i[(\q^{\i+1})^t] \leq  (\q^{\i})^t \left(1 + \frac{c_6 t}{S} \right) + O\left(\frac{c_5 t^2}{S^2}\right)  \]
This implies that for any $t \geq 0$ and $0 \leq \i \leq S\T$,
\begin{equation}\label{eq:bound_q_powers}
    \E[(\q^{\i})^t] \leq \left(c_7 + \frac{c_5t^2}{S}\right)e^{c_6 \T} \leq \left(C(\T) + \frac{c_5 t}{S}\right)^t
\end{equation}

\subsection{Assumption \ref{A1}}

We have for all $i, j \in [p+k]$,
\begin{equation}
    \nonumber
   \left(\Omega_{ij}^{\i+1} - \E_\i [\Omega_{ij}^{\i+1}]\right)^2 \leq 2\left((\Omega_{ij}^{\i+1} - \Omega_{ij}^{\i})^2 + (\Omega_{ij}^{\i} - \E_\i[\Omega_{ij}^{\i+1}])^2\right) \;.
\end{equation}
As a consequence,
\begin{equation}
    \nonumber
  \E \norm{ \bm{\Lambda}^\i }^{2} \leq 4 \max_{i,j} (\Omega_{ij}^{\i+1} - \Omega_{ij}^{\i})^2 \;.
\end{equation}

Now, by definition,
\[ (\q_{ij}^{\i+1} - \q_{ij}^\i)^2 \leq \frac {L} {S^2} \left(c_1(\Er^\i)^2 + c_2|\Er^\i|(|\lf_i| + |\lf_j|) \right)^2 \leq \frac L {S^2} \left( c_3 (\Er^\i)^4 + c_4 (\max_\ell \lf^\i_\ell)^4 \right), \]
The term in $(\Er^\i)^4$ is bounded by the same techniques as the last section. For the second term,
\[ \E_\i\left[(\max_\ell \lf_\ell)^4 \right] \leq c_5  \log(p)^2 \left(\max_\ell q^\i_{\ell\ell} \right)^4, \]
and we can write for any $t \geq 0$
\[ \max_\ell{(q_{\ell \ell}^\i)^4} \leq \left( \sum_{\ell} (q_{\ell\ell}^\i)^t \right)^{4/t}. \]
By Jensen's inequality, for $t \geq 4$
\[ \E\left[\left(\max_\ell{q_{\ell \ell}^\i}\right)^4 \right] \leq \left( \sum_{\ell} \E[(q_{\ell\ell}^\i)^t \right)^{4/t} \leq p^{4/t} \left(C(\T) +\frac{c_6 t}{S} \right)^4, \]
using \eqref{eq:bound_q_powers}. Choosing $t = 4 \log(p) \ll S$ shows that
\[ \E\left[ \max_{i,j} (\q_{ij}^{\i+1} - \q_{ij}^\i)^2 \right] \leq \frac{C(\T) \log(p)^2}{S^2}  \]

A similar bound holds for the $\m_{ij}$, and hence
\begin{equation}
    \nonumber
     \E \norm{ \bm{\Lambda}^\i }^{2}  \leq \frac{c_5 \log(p)^2}{S^2} \;,
\end{equation}
which implies Assumption \ref{A1} with $\epsilon_1 = 1$ and $C(p, \T) = C'(\T)\log(p)$.

\subsection{Assumption \ref{A2}}
Since $\act$ is Lipschitz, for any $i, j \in [p]$
\[ \Er^\i_i \Er^\i_j \leq L^2 (\Er^\i)^2. \]
Hence,
\begin{align*} 
\E[\lVert\bm\Gamma^\i\rVert_\infty] &\leq \frac{L^2 \lr^2}{\inp^2\hids^2}\E\left[\left(\lVert \x \rVert_2^2 - d \right) \Phi^\i \right]\\ 
&\leq \frac{L^2 \lr^2}{\inp^2\hids^2} \left(\frac1{2\sqrt{d}} \E\left[ \left(\lVert \x \rVert_2^2 - d \right)^2 \right] + \frac{\sqrt{d}}2 \E\left[ (\Er^\i)^4 \right]\right).
\end{align*}
The first expectation is the variance of a $\chi^2_d$ random variable, which is equal to $2d$, and the second expectation is bounded by the same methods as the above sections. The term in brackets is therefore bounded by $c_1\sqrt{d}$, and
\[ \E[\lVert\bm\Gamma^\i\rVert_\infty] \leq c_2  \frac{\lr^2}{\inp^{3/2}\hids^2} \]
Finally, since for any $y > 0$ we have $y^2 \leq \max(y, y^2)^{3/2}$, letting $y = \lr/\hids$ we find
\[ \E[\lVert\bm\Gamma^\i\rVert_\infty] \leq c_2 \max\left( \frac{\lr}{\hids \inp}, \frac{\lr^2}{\hids^2 \inp}  \right)^{3/2} \leq c_3 (\delta t)^{3/2}, \]
hence Assumption \ref{A2} is true with $\epsilon_2 = 1/2$.

\subsection{$\surd$-Lipschitz property}

Let $\Om, \Om' \in \R^{(\hids+\hidt)\times(\hids+\hidt)}$, we can write the $(i, j)$ coefficient of $\psi(\Om)$ as $f_{ij}(\sqrt{\Om}) $, where 
\begin{align*} 
    f:  \R^{(\hids+\hidt)\times(\hids+\hidt)} &\to \R\\
        A &\mapsto \E_{x \sim \mathcal \gauss(0, I_{p+k})} [g_{ij}(Ax)]
\end{align*}
The same arguments as above show that the function $f$ is Lipschitz, and hence for some constant $L''$ we have
\[ \lVert\psi(\Om) - \psi(\Om')\rVert \leq L'' \lVert \sqrt{\Om} - \sqrt{\Om'} \rVert. \]
\section{A lemma on ODE perturbation}

In this section, we prove a proposition that bounds the difference between an ODE solution and a perturbed version, for a bounded time $t$.

\begin{theorem}\label{th:ode_perturbation}
    Let $f, g: \mathbb R^n \to \mathbb R^n$ be two $L$-Lipschitz functions, and consider the following differential equations in $\mathbb R^n$:
    \begin{align*}
       \dv{\x}{t}  &= f( \x ) + \epsilon g( \x ),\\
        \dv{\bm{y}}{t}  &= f( \bm{y} ),
    \end{align*}
    where $\epsilon > 0$, and with the initial condition 
    $ \x (0) = \bm{y} (0) $ . Then, if $\T > 0$ is fixed, we have
     \[ \lVert \x (t) - \bm{y} (t) \rVert_2 \leq c \epsilon e^{L \tau}  \]
    for any $0 \leq t \leq \T$, with $c$ a constant independent from $\epsilon, \T$.
\end{theorem}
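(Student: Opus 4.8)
The plan is to use the standard Gr\"onwall argument. Define the difference $\bm{z}(t) = \x(t) - \bm{y}(t)$, which satisfies $\bm{z}(0) = \bm{0}$. Writing the two ODEs in integral form and subtracting gives
\[
\bm{z}(t) = \int_0^t \Bigl( f(\x(s)) - f(\bm{y}(s)) \Bigr)\, ds + \epsilon \int_0^t g(\x(s))\, ds .
\]
Taking norms and applying the triangle inequality together with the $L$-Lipschitz property of $f$ yields
\[
\lVert \bm{z}(t) \rVert_2 \leq L \int_0^t \lVert \bm{z}(s) \rVert_2 \, ds + \epsilon \int_0^t \lVert g(\x(s)) \rVert_2 \, ds .
\]
So the first step is to control the inhomogeneous term $\epsilon \int_0^t \lVert g(\x(s)) \rVert_2\, ds$ by $c' \epsilon$ for $t \leq \T$, i.e.\ to bound $\lVert g(\x(s)) \rVert_2$ uniformly on $[0,\T]$.

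For that bound I would first argue that $\x$ stays in a bounded region on $[0,\T]$. Since $g$ is $L$-Lipschitz, $\lVert g(\x) \rVert_2 \leq \lVert g(\bm{y}(0)) \rVert_2 + L \lVert \x - \bm{y}(0) \rVert_2$, and a crude a priori bound on $\lVert \x(t) \rVert_2$ follows from the Lipschitz (hence at-most-linear-growth) property of $f + \epsilon g$ applied with another Gr\"onwall estimate: $\lVert \x(t) \rVert_2 \leq (\lVert \x(0)\rVert_2 + t\,\lVert (f+\epsilon g)(\bm 0)\rVert_2)\, e^{(1+\epsilon)Lt}$, which for $\epsilon \leq 1$ (say) and $t \leq \T$ is at most some constant $R(\T)$ independent of $\epsilon$. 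Hence $\sup_{s \leq \T} \lVert g(\x(s)) \rVert_2 \leq M(\T)$ for a constant depending only on $L$, $\T$, $\lVert\x(0)\rVert_2$ and the values of $f,g$ at a fixed point, and in particular not on $\epsilon$. Therefore $\epsilon \int_0^t \lVert g(\x(s))\rVert_2\, ds \leq \epsilon\, \T\, M(\T) =: c\,\epsilon$ (folding $\T M(\T)$ into the constant, or keeping the $\T$-dependence explicit as allowed by the statement).

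With this in hand the inequality becomes $\lVert \bm{z}(t) \rVert_2 \leq c\,\epsilon + L \int_0^t \lVert \bm{z}(s) \rVert_2\, ds$, and the integral form of Gr\"onwall's inequality immediately gives $\lVert \bm{z}(t) \rVert_2 \leq c\, \epsilon\, e^{Lt} \leq c\,\epsilon\, e^{L\T}$ for all $0 \leq t \leq \T$, which is exactly the claimed bound. The only mildly delicate point — and the main obstacle such as it is — is making sure the constant $c$ really is independent of $\epsilon$: this is why one restricts to $\epsilon$ bounded (e.g.\ $\epsilon \le 1$), which is harmless since in the intended application $\epsilon = \inp^{-(\kap+\del)} \to 0$, and one absorbs the at-most-linear growth of $g$ along the trajectory into the a priori radius $R(\T)$ before invoking Gr\"onwall the second time. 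Everything else is routine; no structural features of $f$ or $g$ beyond Lipschitzness are used.
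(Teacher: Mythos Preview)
Your argument is correct and follows the textbook Gr\"onwall route. The paper takes a slightly different path: rather than working in integral form, it differentiates $\lVert \x \rVert^2$ and $\lVert \x - \y \rVert^2$ to obtain scalar inequalities of the shape $z' \leq az + b\sqrt{z}$, and handles these via a dedicated lemma (Lemma~\ref{lem:norm_ode_solution}). The overall structure is the same --- first an a priori bound on $\lVert \x(t) \rVert$, then a bound on $\lVert g(\x) \rVert$ along the trajectory, then close the estimate on $\lVert \x - \y \rVert$ --- but each step in the paper goes through the square-root lemma where you invoke the standard linear Gr\"onwall inequality. Your version is arguably more direct and uses only off-the-shelf tools.

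One caveat: the statement asks for $c$ independent of $\T$, whereas your $c = \T\, M(\T)$ is not (your parenthetical ``as allowed by the statement'' misreads it). This is easily repaired by keeping the exponential growth $\lVert g(\x(s)) \rVert \leq C_1 e^{2Ls}$ inside the Gr\"onwall integral rather than taking a uniform supremum over $[0,\T]$; one then obtains $\lVert \x(t) - \y(t) \rVert \leq c'\epsilon\, e^{c'' L\T}$ with $c', c''$ genuinely independent of $\epsilon$ and $\T$, albeit with a constant $c'' > 1$ in the exponent rather than the stated $e^{L\T}$. The paper's squared-norm approach is what buys the sharper exponent. For the application in Section~\ref{sec:main} this distinction is immaterial.
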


Before proving this proposition, we begin with a small lemma:

\begin{lemma}\label{lem:norm_ode_solution}
    Let $a, b > 0$, and $z: \mathbb R^+ \to \mathbb R^+$ a function satisfying
    \[ \dv{z}{t}  = a z + b \sqrt{z} \]
    with $z(0) = 0$. Then, for some constant $c > 0$, we have
    \[ z(t) \leq c \frac{b^2 e^{at}}{a^2} \quad \text{for all} \quad t\geq 0\]
\end{lemma}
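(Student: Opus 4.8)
The plan is to prove Lemma \ref{lem:norm_ode_solution} first, then deduce Theorem \ref{th:ode_perturbation} from it. For the lemma, since $z \geq 0$ solves $\dot z = az + b\sqrt z$ with $z(0) = 0$, the natural move is the substitution $u = \sqrt z$, so that $u \geq 0$ and (on any interval where $z > 0$) $\dot u = \frac{\dot z}{2\sqrt z} = \frac{a}{2} u + \frac{b}{2}$. This is a linear first-order ODE; with $u(0) = 0$ its solution is $u(t) = \frac{b}{a}\left(e^{at/2} - 1\right)$. Hence $z(t) = u(t)^2 = \frac{b^2}{a^2}\left(e^{at/2} - 1\right)^2 \leq \frac{b^2}{a^2} e^{at}$, which is the claimed bound with $c = 1$. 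The only subtlety is handling the point $t = 0$ where $\sqrt z$ is not differentiable in the usual sense and uniqueness of the ODE can fail; I would address this either by a comparison/Gronwall-type argument (the function $\bar z(t) = \frac{b^2}{a^2}(e^{at/2}-1)^2$ is a supersolution, so $z \leq \bar z$) or by noting that any solution with $z(0) = 0$ is bounded above by the solution started from $z(\eta) = $ small and letting $\eta \to 0$. A clean way: set $w(t) = \sqrt{z(t)}$, which is absolutely continuous, and show $w(t) \leq \frac{b}{a}(e^{at/2}-1)$ by observing $\frac{d}{dt}\left(e^{-at/2} w\right) \leq \frac{b}{2} e^{-at/2}$ wherever $w > 0$, then integrating.

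For Theorem \ref{th:ode_perturbation}, let $r(t) = \lVert \x(t) - \bm y(t)\rVert_2$ with $r(0) = 0$. Differentiating (using $\frac{d}{dt}\lVert v\rVert_2 \leq \lVert \dot v\rVert_2$ in the weak/a.e.\ sense) gives
\[ \dot r(t) \leq \lVert f(\x) - f(\bm y)\rVert_2 + \epsilon \lVert g(\x)\rVert_2 \leq L r(t) + \epsilon \lVert g(\x(t))\rVert_2. \]
If I can bound $\lVert g(\x(t))\rVert_2 \leq M$ uniformly on $[0,\tau]$, then Gronwall immediately yields $r(t) \leq \frac{\epsilon M}{L}\left(e^{L\tau} - 1\right) \leq \frac{\epsilon M}{L} e^{L\tau}$, which is exactly the stated bound with $c = M/L$ (and note that $M$ may depend on $\tau$ through the trajectory, but for fixed $\tau$ it is a constant independent of $\epsilon$). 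Actually the square-root lemma is not even strictly needed here; it suggests the authors may instead want a formulation where the perturbation bound degrades like $\sqrt{\epsilon}$ or the driving term involves $\sqrt{r}$, but for the clean Lipschitz statement as written, plain Gronwall suffices. I would still use the lemma's spirit if the intended coupling to Theorem \ref{th:conv_eps} introduces a $\sqrt{\cdot}$ term.

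The main obstacle is the boundedness of $\lVert g(\x(t))\rVert_2$ on $[0,\tau]$: this requires knowing the trajectory $\x(t)$ itself stays bounded. I would handle this by a bootstrap/continuation argument: on $[0,\tau]$ the unperturbed solution $\bm y(t)$ is continuous hence bounded, say $\lVert \bm y(t)\rVert_2 \leq R_0$; as long as $r(t) \leq 1$ we have $\lVert \x(t)\rVert_2 \leq R_0 + 1$, so $\lVert g(\x(t))\rVert_2 \leq \lVert g(0)\rVert_2 + L(R_0+1) =: M$; then Gronwall gives $r(t) \leq \frac{\epsilon M}{L}e^{L\tau}$, which for $\epsilon$ small enough is indeed $\leq 1$, closing the bootstrap. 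For $\epsilon$ not small the statement is vacuous after absorbing into the constant $c$ (or one simply notes the bound $ce^{L\tau}$ with $c$ large covers the trivial regime). This continuation step is routine but is the one place where care is genuinely needed; everything else is a direct Gronwall computation.
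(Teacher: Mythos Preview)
Your proof of the lemma is correct and more direct than the paper's. You linearize via the substitution $u=\sqrt z$, obtaining $\dot u=\tfrac a2 u+\tfrac b2$ and hence $z(t)=\tfrac{b^2}{a^2}(e^{at/2}-1)^2$, with the non-uniqueness at $z=0$ handled by a supersolution/comparison argument; this even gives the sharp constant $c=1$. The paper instead first rescales to $a=b=1$, then sidesteps the non-Lipschitz right-hand side by passing to the Lipschitz majorant $\dot{\tilde z}=\max(\tilde z,1)+\max(\sqrt{\tilde z},1)$, invokes Picard--Lindel\"of for that auxiliary ODE, and solves it piecewise. Your route is shorter and yields a better constant; the paper's detour makes the well-posedness step more explicit but is otherwise equivalent.

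On the theorem you also sketch (not the statement under review, but worth a remark): the paper \emph{does} use the lemma there, because it differentiates the squared norms $\lVert\x\rVert^2$ and $\lVert\x-\y\rVert^2$ (thereby avoiding the non-differentiability of $\lVert\cdot\rVert$ at the origin), which produces differential inequalities exactly of the form $\dot z\le az+b\sqrt z$. Your alternative of working with $r=\lVert\x-\y\rVert$ and applying Gronwall directly is also valid, and you are right that it makes the square-root lemma unnecessary for that step.
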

\begin{proof}
    Upon considering the function $a^2 z(t / a) / b^2$ instead, we can assume that $a = b = 1$. Then, we have
    \[ \dv{z}{t}   \leq \max(z, 1) + \max(\sqrt{z}, 1), \]
    and the RHS is an increasing function. Hence, if $\tilde z$ is a solution of
    \[ \dv{\tilde z}{t}  = \max(z, 1) + \max(\sqrt{\tilde z}, 1), \]
    with $\tilde z(0) = 0$, then $z(t) \leq \tilde z(t)$ for all $t \geq 0$. Since the RHS of the above equation is Lipschitz everywhere, we can apply the Picard–Lindelöf theorem, and check that the unique solution to this equation is
    \[ \tilde z(t) = \begin{cases}
        2 t & \text{if } t\leq \frac{1}{2}\\
        (c_1 e^{t} -c_2)^2 & \text{otherwise}
    \end{cases},\]
    where $c_1$ and $c_2$ are ad hoc constants. The lemma then follows from adjusting the constant $c$ as needed.
\end{proof}

We are now in a position to show Theorem \ref{th:ode_perturbation}:
\begin{proof}
    Assume for simplicity that 
    $\x(0) = \bm{y}(0) = \bm{0}$. We begin by bounding 
    $\x (t) $; we have
      \begin{equation}
        \nonumber
        \dv{\lVert \x \rVert^2}{t}  = 2  \x^\top  \dv{\x}{t}   \leq 2  \lVert \x \rVert  \; \lVert f( \x ) + \epsilon g( \x ) \rVert \;.
    \end{equation}
    By the Lipschitz condition,
     \begin{equation}
    \nonumber
        \lVert f( \x ) + \epsilon g( \x ) \rVert \leq \lVert f( \bm{0} ) + \epsilon g( \bm{0} ) \rVert + \frac{L}{2} \lVert \x \rVert \;,
    \end{equation}
    so that
    \begin{equation}
    \nonumber
       \dv{\lVert \x \rVert^2}{t} \leq L \lVert \x \rVert^2 + 2 \lVert f ( \bm{0} ) + \epsilon g( \bm{0} ) \rVert \; \lVert \x \rVert   \;.
    \end{equation}
    Applying Lemma \ref{lem:norm_ode_solution} and taking square roots on each side,
       \begin{equation}
    \label{eq:bound_norm_x}
        \lVert \x (t) \rVert \leq c\frac{\lVert f( \bm{0} ) + \epsilon g( \bm{0} ) \rVert }{L} e^{Lt/2} \leq c\frac{\lVert f( \bm{0} ) + \epsilon g( \bm{0} ) \rVert }{L} e^{L\T/2} \;,
    \end{equation}
    for any $0 \leq t \leq \T$. Now, similarly,
     \begin{align*}
     \dv{\lVert \x - \y \rVert^2}{t}  & \leq 2  \lVert \x - \y \rVert \left\lVert \dv{( \x - \y )}{t}  \right\rVert\\
    &\leq 2 \lVert \x - \y \rVert \; \lVert f( \x ) - f( \y ) + \epsilon g( \x ) \rVert \\
    &\leq L  \lVert \x - \y \rVert^2 + 2 \epsilon \lVert g( \x ) \rVert \; \lVert \x - \y \rVert \\
    &\leq L \lVert \x - \y \rVert^2 +  \epsilon \left( \lVert g( \bm{0} ) \rVert + c\lVert f( \bm{0} ) + \epsilon g( \bm{0} ) \rVert e^{L\T/2} \right) \lVert \x - \y \rVert \;,
    \end{align*}
    having used \eqref{eq:bound_norm_x} on the last line.
    This is again the setting of Lemma \ref{lem:norm_ode_solution}, which gives
      \begin{equation}
        \nonumber
        \lVert \x - \y \rVert \leq c_1 \epsilon e^{L\T/2} \frac{e^{Lt/2}}{L} \leq c_2 \epsilon e^{L\tau} \;.
    \end{equation}
\end{proof}

\section{Expectations over the local fields}\label{app:expec}

In this appendix we present the explicit expressions from the expectations of the local fields used to compute the population risk and the ODE terms.

\subsection{Population risk}

We write the population risk~\eqref{eq:pop_risk} as
\begin{equation}
\begin{split}
      \risk ( \Om  )  &= \Explf \Expnoise 
    \left[ \left( \hat{\f}( \bm{\lf} ) - \f ( \bm{\lf}^* )  \right)^{2}   \right]   \\
    &=  \riskt ( \P  ) + \risks ( \Q  ) + \riskst (\P , \Q, \M )   \;,
\end{split}
\end{equation}
with
\begin{subequations}
\label{eq:pop_risk_3}
\begin{equation}
    \riskt \equiv \Explft \left[ \f ( \bm{\lf}^* )^2   \right] =\frac{1}{\hidt^2} \sum_{r, s =1}^\hidt \Explft\left[ \act(\lf_{r}^*) \act(\lf_{s}^*)  \right] 
\end{equation}
\begin{equation}
    \risks \equiv \Explfs \left[\hat{\f}( \bm{\lf} )^2   \right] = 
    \frac{1}{\hids^2} \sum_{j, l =1}^\hidt \Explfs\left[ \act(\lf_{j}) \act(\lf_{l}) \right] \;,
\end{equation}
\begin{equation}
    \riskst \equiv \Explf \left[ \hat{\f}( \bm{\lf} ) \f ( \bm{\lf}^* )   \right] =  - \frac{2}{\hids\hidt}  \sum_{j=1}^\hids \sum_{r=1}^\hidt \Explf\left[ \act(\lf_{j}) \act(\lf_{r}^*) \right]
\end{equation}
\end{subequations}

Define the vector $ \bm{\lf}^{\alpha\beta} \equiv \left(  \lf^\alpha , \lf^\beta  \right)^\top \in \R^2  $, where the upper indices on the components indicate they may refer to student or teacher local fields. Consider the covariance matrix on the subspace spanned by $ \bm{\lf}^{\alpha\beta}$:
\begin{equation}
\label{eq:covE2}
\LambER \equiv \Explf \left[  \bm{\lf}^{\alpha\beta}  \left(\bm{\lf}^{\alpha\beta} \right)^\top  \right] \in \R^{2 \times 2} \;.
\end{equation}

For $\act(x) = \erf ( x / \sqrt{2} ) $ the expectations in Eqs.~\eqref{eq:pop_risk_3} are in general given by \cite{saad_1995}
\begin{equation}
    \Explf  \left[\act (\lf^\alpha ) \act( \lf^\beta )  \right] = 
    \frac{1}{\pi} \arcsin \left( \frac{   \LambERscal_{12}  }{\sqrt{\left( 1+ \LambERscal_{11}\right) \left( 1+ \LambERscal_{22} \right)}}    \right)   \;.
\end{equation}
where $   \LambERscal_{jl} \equiv (\LambER)_{jl} $ is an element of the covariance matrix given by Eq.~\eqref{eq:covE2}. 

Explicitly, the population risk contributions are
\begin{subequations}
\label{eq:pop_risk_3_exp}
\begin{equation}
    \riskt ( \P  ) =\frac{1}{\hidt^2} \sum_{r, s =1}^\hidt  \frac{1}{\pi} \arcsin \left( \frac{   \p_{r s}  }{\sqrt{\left( 1+ \p_{r r} \right) \left( 1+ \p_{s s} \right)}}    \right) \;,
\end{equation}
\begin{equation}
    \risks ( \Q  ) =    \frac{1}{\hids^2} \sum_{j,l  =1}^\hidt  \frac{1}{\pi} \arcsin \left( \frac{   \q_{j l}  }{\sqrt{\left( 1+ \q_{j j} \right) \left( 1+ \q_{ll} \right)}}    \right)  \;,
\end{equation}
\begin{equation}
    \riskst (\P,  \Q , \M )  =  - \frac{2}{\hids\hidt}  \sum_{j=1}^\hids \sum_{r=1}^\hidt  \frac{1}{\pi} \arcsin \left( \frac{   \m_{j r}  }{\sqrt{\left( 1+ \q_{j j} \right) \left( 1+ \p_{rr} \right)}}    \right)  \;.
\end{equation}
\end{subequations}

\subsection{ODE contributions}

From the update equations, we first consider the expectations linear in $\Er_j$:
\begin{subequations}
\label{eq:expE}
\begin{equation}
\begin{split}
  \Explf \Expnoise \left[  \Er_{j} \lf_{l}  \right]  =&  
  \frac{1}{\hidt} \sum_{r' =1}^{\hidt} \Explf  \left[\act' (\lf_{j}) \lf_l \act( \lf_{r'}^{*}  )  \right]
   \\
   &  -  \frac{1}{\hids} \sum_{l' = 1}^{\hids} \Explf \left[ \act' (\lf_{j}) \lf_l   \act ( \lf_{l'}  ) \right] \;, 
\end{split}
\end{equation}
\begin{equation}
\begin{split}
   \Explf \Expnoise   \left[  \Er_{j} \lf_{r}^*  \right]  =&    \frac{1}{\hidt} \sum_{r'=1}^{\hidt}   \Explf   \left[\act' (\lf_{j}) \lf_{r}^* \act( \lf_{r'}^{*}  )  \right]
   \\
   &   -   \frac{1}{\hids} \sum_{l' = 1}^{\hids} \Explf \left[ \act' (\lf_{j}) \lf_{r}^*   \act ( \lf_{l'}  ) \right]  \;.
\end{split}
\end{equation}
\end{subequations}
Define the vector $ \bm{\lf}^{\alpha\beta\gamma} \equiv \left(  \lf^\alpha , \lf^\beta , \lf^\gamma  \right)^\top \in \R^3  $, where the upper indices on the components indicate they may refer to student or teacher local fields. Consider the covariance matrix on the subspace spanned by $ \bm{\lf}^{\alpha\beta\gamma}$:
\begin{equation}
\label{eq:covE}
\LambEM \equiv \Explf \left[  \bm{\lf}^{\alpha\beta\gamma}  \left(\bm{\lf}^{\alpha\beta\gamma} \right)^\top  \right] \in \R^{3\times3} \;.
\end{equation}


For $\act(x) = \erf ( x / \sqrt{2} ) $ the expectations in Eqs.~\eqref{eq:expE} are given by \cite{saad_1995}
\begin{equation}
    \Explf  \left[\act' (\lf^\alpha ) \lf^\beta \act( \lf^\gamma )  \right] = 
    \frac{2}{\pi} \frac{ \LambE_{23} \left( 1+ \LambE_{11}\right) - \LambE_{12}\LambE_{13} }{ \left( 1+\LambE_{11} \right) \sqrt{\left( 1+\LambE_{11} \right) \left( 1+\LambE_{33} \right) - \left( \LambE_{13} \right)^2   }} \;,
\end{equation}
where $  \LambE_{jl} \equiv (\LambEM)_{jl} $ is an element of the covariance matrix given by Eq.~\eqref{eq:covE}. As examples, we write explicitly:
\begin{equation}
\bm{\Om}^{jl r'} = \begin{bmatrix}
\q_{jj} & \q_{jl} & \m_{j r'} \\
\q_{jl} & \q_{ll} & \m_{l r'} \\
\m_{j r'} & \m_{l r'} & \p_{r' r'}
\end{bmatrix} \; \; \; , \; \; \; 
\bm{\Om}^{j r r'} = \begin{bmatrix}
\q_{jj} & \m_{jr} & \m_{j r'} \\
\m_{jr} & \p_{rr} & \p_{r r'} \\
\m_{j r'} & \p_{r r'} & \p_{r' r'}
\end{bmatrix}  \;.
\end{equation}

The quadratic contribution in $\Er_j$ is given by
\begin{equation}
\label{eq:expEE}
\begin{split}
     \Explf \Expnoise \left[  \Er_{j} \Er_{l}  \right]  =&  
     \frac{1}{\hidt^2} \sum_{ r ,  r' =1}^{\hidt} \Explf  \left[\act' (\lf_{j}) 
     \act' (\lf_{l}) \act (\lf_{r}^*  ) \act ( \lf_{r'}^{*}  )  \right]  \\
     & + \frac{1}{\hids^2} \sum_{ j' ,  l' =1}^{\hids} \Explf  \left[\act' (\lf_{j}) 
     \act' (\lf_{l}) \act (\lf_{j'}  ) \act ( \lf_{l'}  )  \right]  \\ 
     & - \frac{2}{\hids\hidt} \sum_{ l' =1}^{\hids} \sum_{ r =1}^{\hidt} \Explf  \left[\act' (\lf_{j}) 
     \act' (\lf_{l}) \act (\lf_{r}^*  ) \act ( \lf_{l'}  )  \right]  \\
     & + \noise   \Explf  \left[\act' (\lf_{j}) 
     \act' (\lf_{l})  \right]
\end{split}
\end{equation}

The solution of the noise-dependent term can be constructed with the covariance matrix \eqref{eq:covE2} and is given by \cite{goldt_2019}
\begin{equation}
    \Explf  \left[\act' (\lf^\alpha)  \act' (\lf^\beta)  \right] =  \frac{2}{\pi} 
    \frac{1}{\sqrt{ 1+ \LambERscal_{11} + \LambERscal_{22} +
    \LambERscal_{11}\LambERscal_{22} - \left( \LambERscal_{12} \right)^2   }}    
\end{equation}

Similarly, one can define the vector $ \bm{\lf}^{\alpha\beta\gamma\delta} \equiv \left(  \lf^\alpha , \lf^\beta , \lf^\gamma , \lf^\delta  \right)^\top \in \R^4  $ and write the covariance matrix on the subspace spanned by $ \bm{\lf}^{\alpha\beta\gamma\delta}$:
\begin{equation}
    \LambEEM  \equiv \Explf \left[  \bm{\lf}^{\alpha\beta\gamma\delta}  \left( \bm{\lf}^{\alpha\beta\gamma\delta} \right)^\top \right]  \in \R^{4\times 4} \;.
\end{equation}

For $\act(x) = \erf ( x / \sqrt{2} ) $ the expectations in Eqs.~\eqref{eq:expEE} are given by \cite{saad_1995}
\begin{equation}
    \Explf  \left[\act' (\lf^\alpha ) \act' (\lf^\beta )  \act (\lf^\gamma ) \act( \lf^\delta  )  \right] = 
    \frac{4}{\pi^2} \frac{1}{ \sqrt{\LambEEbar_0} } \arcsin \left( \frac{ \LambEEbar_1 }{\sqrt{\LambEEbar_2 \LambEEbar_3}}   \right) \;,
\end{equation}
with
\begin{subequations}
\begin{equation}
    \LambEEbar_0 \equiv \left( 1+\LambEE_{11} \right) \left( 1+\LambEE_{22} \right) - \left( \LambEE_{12} \right)^2 \;,
\end{equation}
\begin{equation}
\begin{split}
   \LambEEbar_1 \equiv &  \LambEEbar_0 \LambEE_{34} - \LambEE_{23} \LambEE_{24} \left( 1+\LambEE_{11} \right) - \LambEE_{13} \LambEE_{14} \left(1 + \LambEE_{22} \right) \\
   & + \LambEE_{12} \LambEE_{13} \LambEE_{24} + \LambEE_{12}\LambEE_{14}\LambEE_{23} \;,
\end{split}
\end{equation}
\begin{equation}
\begin{split}
   \LambEEbar_2 \equiv & \LambEEbar_0 \left( 1 + \LambEE_{44} \right)  - \left(\LambEE_{24}\right)^2 \left(1+ \LambEE_{11} \right)   - \left(\LambEE_{13}\right)^2 \left(1+ \LambEE_{22} \right) \\
   & + 2
   \LambEE_{12} \LambEE_{13} \LambEE_{23} \;, \;.    
\end{split}
\end{equation}
\begin{equation}
\begin{split}
  \LambEEbar_3 \equiv & \LambEEbar_0 \left( 1 + \LambEE_{44} \right)  - \left(\LambEE_{24}\right)^2 \left(1+ \LambEE_{11} \right)   - \left(\LambEE_{14}\right)^2 \left(1+ \LambEE_{22} \right)  \\
  & + 2    \LambEE_{12} \LambEE_{14} \LambEE_{24} \;.    
\end{split}
\end{equation}
\end{subequations}

\subsection{From gradient flow to local fields}
\label{app:c:flow}
Consider the gradient flow approximation
\begin{align*}
    \dv{\w_j}{t}  &= -\nabla_{\w_j}\risk(\W, \W^*)\\
    &= - \frac1{p\sqrt{d}}\Expxgauss\left[ \x \act'(\lf_j ) \Er \right].
\end{align*}

Now, since for any $\bm{x}^{\top}\bm{y}$, we have
\begin{equation}
    \nonumber
    \dv{\left(\bm{x}^{\top}\bm{y}\right)}{t}  =  \bm{x}^{\top} \dv{\bm{y}}{t} + \bm{y}^{\top} \dv{\x}{t} \;,
\end{equation}
we find
\begin{equation}
    \nonumber
\dv{\q_{jl}}{t}   = -\frac{1}{pd} \Expxgauss\left[ \left(\act'(\lf_j)  \lf_l + \act'(\lf_l) \lf_j \right) \Er \right] \;.
\end{equation}
Recalling the definition $\Er_j = \act'(\lf_j)\Er$, the terms present inside the expectation are exactly those in the learning term of Eq.~\eqref{eq:qm_up}.

\section{Initial conditions and symmetric teacher}\label{app:init_cond}

In this work we have constructed teacher matrices $ \W^* \in \R^{\hidt\times\inp} $ in order to have 
\begin{equation}
\label{eq:sym_teacher}
    \prs =  \frac{ \w^{*\top}_r \w^{*}_{s} }{d} = \delta_{rs} \;,
\end{equation}
where $ \w^{*}_r \equiv [\W^*]_r \in \R^\inp $ is the $r$-th row of the matrix $\W^*$. We have started by sampling $\hidt$ vectors of dimension $\inp$ uniformly on a ball of radius $\sqrt{\inp}$. Then we constructed an orthonormal basis using singular value decomposition. 

The initial student weights $\W^0 \in \R^{\hids\times\inp}$ were taken as 
\begin{equation}
\W^0 = \bm{A} \W^*     \;,
\end{equation}
with each row of $\bm{A} \in \R^{\hids \times \hidt}$ sampled uniformly on a ball of radius one. We acknowledge choosing initial student weights as linear combinations of the teacher can be artificial and shrinks the first plateau, but our focus on this work was the specialization phase. Nevertheless, this choice and Eq.~(\ref{eq:sym_teacher}) are particularly suitable to theoretical analysis. Once $\hidt$ and $\hids$ are fixed, the dimension $\inp$ can be varied without changing $ \Q^0$, $\M^0$ and $\P$, thereby removing any influence of different initial conditions for different $\inp$ and providing the reader better visualization on the learning curves. To clarify this point, consider the $j$-th row $ \w^{0}_j \equiv [\W^0]_j \in \R^\inp $ of $\W^0 $:
\begin{equation}
    \w^{0}_j   =  \sum_{r=1}^{\hidt} a_{jr} \w_{r}^*  \;,
\end{equation}
with $ a_{jr} \equiv [ \bm{A}  ]_{jr} $. Using Eq.~\eqref{eq:sym_teacher} one can write
\begin{equation}
\qjl^0 = \frac{\w^{0\top}_j \w^{0}_l }{d} = \sum_{r,r' =1}^{\hidt} a_{jr} a_{j r'}   \underbrace{\frac{\w^{*\top}_r \w^{*}_{r'} }{d}}_{=\delta_{ r r'} } = \sum_{r = 1}^{\hidt} a_{jr} a_{l r} \;.
\end{equation}
Similarly,
\begin{equation}
\mjr^0 =\frac{\w^{0\top}_j \w^{*}_r }{d}  =  a_{jr} \;.
\end{equation}
Thus once $\bm{A}$ is fixed, the input dimension $\inp$ can be varied without affecting the initial conditions. We chose to sample $ \bm{a}_j \equiv [ \bm{A}  ]_j \in \R^\hidt $ on a ball of radius one both to introduce some randomness on the initialization and to keep the initial parameters bounded by one.

We stress that we use these initial conditions to make the data comparable for varying dimension $d$ in the numerical illustrations. Our conclusions do not depend on this particular choice of initial conditions. If one simply takes random initialization $ \w_j \sim \gauss  (\w_j |\bm{0},\Id ) $ for each $j$, the full picture we have presented in this manuscript remains unchanged. In Figure \ref{fig:plot_d_init_not_inf} we present an example of curves within the blue region (see Section \ref{sec:main} for the characterization of this regime) with unconstrained Gaussian initialization. Dots represent simulations, while solid lines are obtained by integration of the ODEs given by Eqs.~(\ref{eq:qmode0}), with initial conditions adjusted to match simulations.

Although varying the initial population risk with $\inp$ slightly changes the exact position where the specialization transition starts, the particular initial conditions adopted in this work do not affect whether the specialization transition takes place or not, comparing to unconstrained Gaussian initialization.

\begin{figure}[H]
\begin{center}
\centerline{\includegraphics[width=0.45\textwidth]{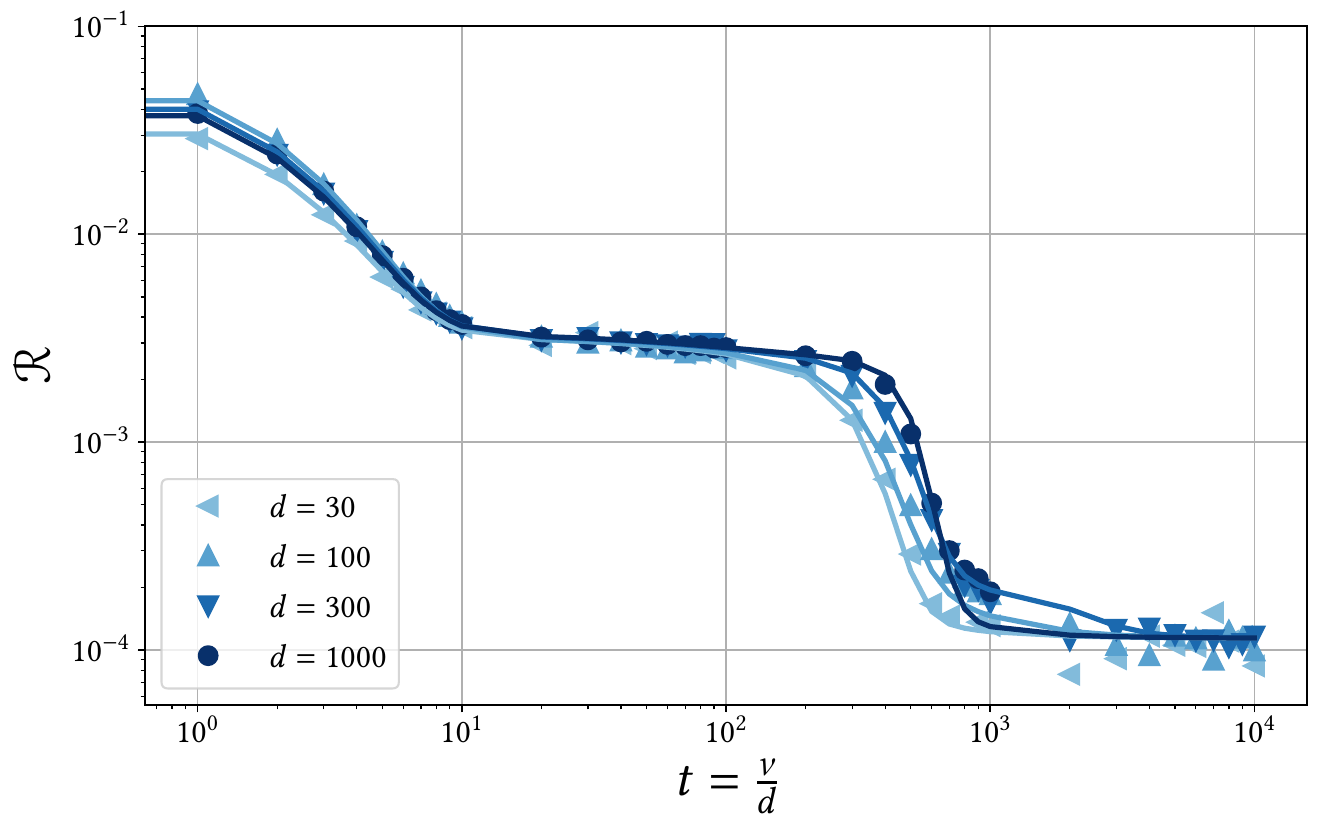}}
\caption{Population risk dynamics for $\kap=\del=0$ (Saad \& Solla scaling) : $\hids_0 = 8$, $\hidt=4$, $\prs = \delta_{rs}$. Initialization: $ \w_j \sim \gauss  (\w_j |\bm{0},\Id ) $ for $j = 1, ..., \hids_0$. Activation function: $\act(x) = \erf(x/\sqrt{2})$. Data distribution: $\Prob (\x) =  \gauss(\x | \bm{0}, \Id )$. Dots represent simulations, while solid lines are obtained by integration of the ODEs given by Eqs.~(\ref{eq:qmode0}), with initial conditions adjusted to match simulations. Observe the difference on the initialization for different $\inp$.}
\label{fig:plot_d_init_not_inf}
\end{center}
\vskip -0.2in
\end{figure}

\newpage

\addcontentsline{toc}{section}{References}
\bibliographystyle{IEEEtran}
\typeout{}\bibliography{bibtex}

\end{document}